\documentclass{article}

\PassOptionsToPackage{numbers,compress}{natbib}

\usepackage[preprint]{neurips_2026}

\usepackage[utf8]{inputenc} 
\usepackage[T1]{fontenc}    
\usepackage{hyperref}       
\usepackage{url}            
\usepackage{booktabs}       
\usepackage{amsfonts}       
\usepackage{nicefrac}       
\usepackage{microtype}      
\usepackage{xcolor}         

\usepackage{tikz}
\usepackage{amsmath}
\usepackage{booktabs}
\usepackage{wrapfig}
\usepackage{algorithm}
\usepackage{algpseudocode}
\usepackage{multirow}
\usepackage{subcaption}
\usepackage{mdframed}
\usepackage{makecell}
\usepackage{float}
\usepackage{xspace}
\newcommand{\sysname}{SafeMark\xspace}

\usepackage{filecontents}

\usepackage{amssymb,amsthm}
\newtheorem{theorem}{Theorem}
\newtheorem{lemma}{Lemma}

\title{Are Watermarked Images Editable? \sysname{} for Watermark-Preserving Text-Guided Image Editing}

\author{
  Xiaodong Wu \\
  Queen's University \\
  \texttt{xiaodong.wu@queensu.ca} \\
  \And
  Qi Li \\
  Queen's University \\
  \texttt{qi.li@queensu.ca} \\
  \And
  Xiangman Li \\
  Queen's University \\
  \texttt{xiangman.li@queensu.ca} \\
  \And
  Zelin Zhang \\
  Queen's University \\
  \texttt{zelin.zhang@queensu.ca} \\
  \And
  Lingshuang Liu \\
  University of Waterloo \\
  \texttt{lingshuang.liu@uwaterloo.ca} \\
  \And
  Jianbing Ni \\
  Queen's University \\
  \texttt{jianbing.ni@queensu.ca} \\
}

\begin{document}

\maketitle

\begin{abstract}

This paper investigates a fundamental yet underexplored question: \emph{can watermarked images remain editable without compromising watermark integrity?} We propose \textit{SafeMark}, a framework for watermark-preserving text-guided image manipulation that explicitly integrates watermark integrity into the editing process. Specifically, \textit{SafeMark} adds a thresholded watermark-decoding loss directly to the diffusion editor's training objective, fine-tuning the editor so that semantically valid edits also preserve the embedded watermark at the final output. This design admits a clean information-theoretic justification: maintaining high bit-accuracy on the edited image lower-bounds the mutual information that the editor channel preserves between watermark and edited output, the quantity that fundamentally controls watermark recoverability. \textit{SafeMark} is compatible with differentiable diffusion-based editors, and requires no architectural modification. Extensive evaluations across multiple datasets, text-guided editing methods, and post-edit distortion settings demonstrate that \textit{SafeMark} achieves high watermark bit accuracy across diverse editing settings while maintaining high-quality semantic edits, without sacrificing robustness to common post-edit distortions.
These results demonstrate that semantic editability and watermark integrity are fundamentally compatible, enabling trustworthy image provenance in generative editing pipelines.

\end{abstract}

\section{Introduction}

Rapid advances in generative AI, driven by text-to-image models such as Stable Diffusion~\cite{rombach2022high}, Imagen~\cite{saharia2022photorealistic}, and DALL-E3~\cite{ramesh2022hierarchical}, have greatly accelerated the creation, modification, and redistribution of digital images. This shift makes reliable ownership verification and provenance tracking increasingly critical for creators, platforms, and rights holders. Invisible image watermarking addresses this need by embedding imperceptible signals into images that can later be recovered by dedicated decoders. Prior work has demonstrated the effectiveness of watermarking in text-to-image generation for provenance-preserving image synthesis~\cite{rezaei2024lawa,kim2024wouaf}, spanning both post hoc encoder--decoder schemes, such as HiDDeN~\cite{zhu2018hidden}, MBRS~\cite{jia2021mbrs}, and StegaStamp~\cite{tancik2020stegastamp}, and model-integrated approaches, including Stable Signature~\cite{fernandez2023stable} and Tree-Ring watermarks~\cite{wen2023tree}.

A key requirement of image watermarking is robustness: the embedded watermark must remain verifiable under distortions. Existing methods, such as Robust-Wide~\cite{hu2024robust}, JigMark~\cite{pan2024jigmark}, VINE~\cite{lu2024robust}, EditGuard~\cite{zhang2024editguard}, and SleeperMark~\cite{wang2025sleepermark}, improve survivability by training against predefined distortions or editing behaviors, an adversarial-defense paradigm whose robustness is fundamentally coupled to the specific threat models assumed during training.
However, the rise of text-guided diffusion editors, such as SDEdit~\cite{meng2021sdedit}, InstructPix2Pix~\cite{brooks2023instructpix2pix}, DiffusionCLIP~\cite{kim2022diffusionclip}, Asyrp~\cite{kwon2022diffusion}, and Prompt-to-Prompt~\cite{hertz2022prompt}, has fundamentally changed how images are modified. These methods enable rich, semantics-driven edits via iterative denoising, going far beyond traditional pixel-level perturbations.
This shift raises a critical question: \emph{can watermarks remain verifiable under semantics-driven editing?} More broadly, \emph{can watermarked images remain editable without compromising watermark integrity?} These questions remain largely unexplored and motivate the need for watermark-preserving mechanisms that explicitly enforce integrity during the editing process.

In this paper, we study the coexistence of watermarking and editing and propose \textit{SafeMark}, a framework for watermark-preserving, text-guided image manipulation that treats watermark integrity as an explicit objective during editing. Specifically, we consider a practical setting in which an image editing service provider has access to both the editing model and the watermark encoder--decoder, and study three key research questions about preserving embedded watermarks under text-guided editing.
\textbf{RQ1) Can existing robust watermarking methods preserve watermarks under arbitrary text-guided image edits?}
Our experiments demonstrate that watermark preservation is strongly editor-dependent: for example, SleeperMark retains high bit accuracy under some edits (e.g., $0.977$ on Church images edited with DiffusionCLIP), but degrades significantly under others (e.g., $0.619$ on the same dataset when edited with Eff-Diff).
\textbf{RQ2) Can text-guided image editing be made watermark-safe by design?}
By integrating watermark supervision directly into the editing process, \textit{SafeMark} achieves substantially higher watermark bit accuracy (close to $1.0$ in most settings), while existing baselines typically drop to around $0.6$ under the same conditions.
\textbf{RQ3) Do \textit{SafeMark}-protected images remain robust to subsequent post-processing distortions?}
Under common post-edit distortions such as rotation, blurring, and resizing, \textit{SafeMark} maintains high watermark bit accuracy (around $0.9$), indicating that watermark-preserving editing does not compromise robustness to conventional image corruptions.

To the best of our knowledge, \textit{SafeMark} is the first framework that enforces watermark preservation within the editing objective itself, providing protection by design rather than by approximation. 
The main contributions of this paper are summarized as follows:

\begin{list}{$\bullet$}{%
    \setlength{\leftmargin}{1.35em}
    \setlength{\labelwidth}{0.75em}
    \setlength{\labelsep}{0.6em}
    \setlength{\itemsep}{0.2em}
    \setlength{\parsep}{0pt}
    \setlength{\topsep}{0.3em}
}
    \item We identify and empirically demonstrate that existing robust watermarking methods are editor-dependent and fail to generalize across diverse text-guided image editing pipelines.

    \item We cast watermark-preserving editing as constrained optimization and provide an information-theoretic foundation, including a Fano-type recovery bound, accuracy-to-MI lemmas, and a conditional finite-step convergence theorem, to justify SafeMark's hinge-penalty objective.

    \item We propose \textit{SafeMark}, a framework for watermark-preserving text-guided image manipulation that integrates watermark supervision into the editing process, achieving high watermark bit accuracy while maintaining high-quality manipulation results.

    \item We demonstrate that \textit{SafeMark} remains robust to conventional post-edit distortions, confirming that watermark-preserving editing does not reduce resilience to standard image corruptions.
\end{list}

\section{Related Work}

Image watermarking embeds an imperceptible signal into visual content so that ownership, provenance, or user-level information can be recovered after redistribution. Classical robust watermarking has evolved from hand-crafted frequency-domain designs to neural encoder--decoder systems that are trained against simulated channels such as JPEG compression, resizing, blurring, and color perturbations. Representative methods include HiDDeN \cite{zhu2018hidden}, StegaStamp \cite{tancik2020stegastamp}, MBRS \cite{jia2021mbrs}, RoSteALS \cite{bui2023rosteals}, and TrustMark \cite{bui2023trustmark}. These methods substantially improve robustness to pixel-level and acquisition-time distortions, but text-guided diffusion editing introduces a different challenge: the image may be partially re-synthesized, semantically transformed, or regenerated through a latent generative prior. Recent studies and benchmarks \cite{pan2024jigmark,lu2024robust,an2024waves} show that watermark recovery can degrade sharply under such semantic editing, indicating that robustness to conventional distortions does not directly translate into watermark preservation during generative manipulation.

This gap has motivated editing-aware watermarking, where watermark training or verification explicitly accounts for modern image editors rather than treating editing as ordinary noise. Robust-Wide \cite{hu2024robust} injects instruction-driven editing operations into the watermark training loop through Partial Instruction-driven Denoising Sampling Guidance, encouraging watermark signals to survive semantic changes produced by editors such as InstructPix2Pix \cite{brooks2023instructpix2pix}. JigMark \cite{pan2024jigmark} learns from original/edited image pairs in a black-box manner to improve survival under diffusion edits, while VINE \cite{lu2024robust} analyzes how diffusion editing suppresses high-frequency watermark components and uses generative priors with surrogate degradations to improve edit robustness. Beyond copyright decoding alone, EditGuard \cite{zhang2024editguard} combines watermarking with tamper localization for both classical and AIGC-based edits, and SleeperMark \cite{wang2025sleepermark} studies watermark persistence after fine-tuning text-to-image diffusion models. However, most existing methods still optimize the watermark itself against selected editors, edit distributions, or downstream adaptation settings for robustness; they do not directly constrain the editing process to preserve an already embedded watermark. Our work targets this complementary direction: watermark-preserving editing, where the editor is trained to satisfy the requested semantic modification while maintaining decodability of the embedded watermark.
For evaluation, we select VINE and SleeperMark as the most directly comparable baselines (encoder-decoder schemes for real-image and AIGC settings, respectively); Robust-Wide is designed for instruction-driven editing pipelines and is not directly applicable to the domain-transfer and h-space editors we evaluate, JigMark relies on a black-box jigsaw-key paradigm incompatible with encoder-decoder schemes, and EditGuard focuses on tamper localization not bit-level recovery.

\section{Problem Formulation and Theoretical Foundations}\label{sec:theory}

\subsection{Problem Setup}\label{sec:problem}

An image $\mathbf{x}_{\text{orig}}$ watermarked with  a message $\mathbf{w}\in\{0,1\}^{B}$ is edited by $\mathcal{E}_\theta$ under prompt $\mathbf{p}$, with decoder $\mathcal{D}_\phi$ recovering $\hat{\mathbf{w}}$ from the output; the encoder/decoder are fixed and we optimize only $\theta$.

\textbf{Constrained Objective.}
Let $\mathbf{x}_{\text{edit}}=\mathcal{E}_\theta(\mathbf{x}_{\text{orig}},\mathbf{p})$.
\textit{SafeMark} seeks $\theta^\star$ satisfying:
\begin{equation}
\label{eq:constrained_opt}
\begin{aligned}
\min_{\theta}\quad &
\mathbb{E}_{\mathbf{x}_{\text{orig}},\mathbf{w},\mathbf{p}}
\Big[\mathcal{L}_{\text{sem}}\big(\mathbf{x}_{\text{edit}},\,\mathbf{x}_{\text{ref}}\big)\Big] \\
\text{s.t.}\quad &
\mathrm{Acc}\Big(\hat{\mathbf{w}},\,\mathbf{w}\Big)\ \ge\ \tau,
\qquad
\hat{\mathbf{w}}=\sigma\!\Big(\mathcal{D}_\phi(\mathbf{x}_{\text{edit}})\Big),
\end{aligned}
\end{equation}
where $\tau\in[0,1]$ is the accuracy threshold and $\mathrm{Acc}$ is the hard-bit accuracy reported at evaluation. The semantic loss $\mathcal{L}_{\text{sem}}$ is instantiated in Section~\ref{sec:method}; during training, the non-differentiable $\mathrm{Acc}$ is replaced by a smooth surrogate $\widetilde{\mathrm{Acc}}$ derived in Section~\ref{sec:penalty}.

\subsection{Information-Theoretic Foundation}\label{sec:capacity}

Watermark recoverability is bounded by $I_\theta(W;X_e)$, the mutual information preserved by the editor channel; in this view, hard-bit accuracy provably lower-bounds this quantity, thereby grounding \textit{SafeMark}'s bit-accuracy-driven hinge training objective.

\paragraph{Editor as a watermark channel.}
We treat $W\in\{0,1\}^B$ as uniform with independent bits, matching standard encoders (HiDDeN~\cite{zhu2018hidden}, VINE~\cite{lu2024robust}, etc.). The editor produces $X_e=\mathcal{E}_\theta(X_{\text{orig}},\mathbf{p};\xi)$, and the chain $W\to X_{\text{orig}}\to X_e\to \hat W$ is Markov; we call the conditional distribution of $X_e$ given $W$ the \emph{editor watermark channel}, with mutual information $I_\theta(W;X_e)$ measured in bits.

\begin{theorem}[Information-theoretic necessary condition for watermark recovery]
\label{thm:fano}
For any editor parameter $\theta$ and decoder $D_\phi$, the block error rate $P_e(\theta,\phi):=\Pr[\hat W\neq W]$ satisfies
\begin{equation}
\label{eq:fano}
P_e(\theta,\phi)
\;\ge\;
1-\frac{I_\theta(W;X_e)+1}{H(W)}.
\end{equation}
With $W$ uniform on $\{0,1\}^B$, $H(W)=B$ bits, so $P_e\ge 1-(I_\theta(W;X_e)+1)/B$. \emph{(Proof in Appendix~\ref{app:theory}.)}
\end{theorem}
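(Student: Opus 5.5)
The plan is the standard Fano argument, applied to the Markov chain $W\to X_{\text{orig}}\to X_e\to \hat W$ fixed in the setup. Here $\hat W$ is the hard-thresholded output of $D_\phi(X_e)$, so it is a deterministic function of $X_e$. If the editor noise $\xi$ or the decoder is randomized, that randomness is taken independent of $W$ given $X_e$, and the chain still holds. The argument has three steps: Fano's inequality for estimating $W$ from $\hat W$, the data-processing inequality to pass from $I(W;\hat W)$ to $I_\theta(W;X_e)$, and a crude bound on the error-entropy terms that produces the ``$+1$''.

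\emph{Step 1: Fano.} Let $E=\mathbf{1}[\hat W\neq W]$. Expanding $H(E,W\mid \hat W)$ in two ways gives
\[
H(W\mid \hat W)\le H(E)+\Pr[E=1]\,H(W\mid \hat W,E=1).
\]
Since $E$ is binary, $H(E)\le 1$. Since $W$ takes at most $2^B$ values, $H(W\mid\hat W,E=1)\le \log_2 2^B = B = H(W)$. Together these give $H(W\mid\hat W)\le 1+P_e\,H(W)$. The sharper form would use $\log(2^B-1)$, but the statement only needs the looser $H(W)$ version. For a general (non-uniform) $W$ I would instead bound the inner term by $\log|\mathcal W|$; this matches $H(W)$ exactly in the uniform case the statement specializes to.

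\emph{Step 2: Data processing.} Because $\hat W$ depends on $W$ only through $X_e$, the data-processing inequality gives $I(W;\hat W)\le I_\theta(W;X_e)$. Therefore
\[
H(W\mid\hat W)=H(W)-I(W;\hat W)\ge H(W)-I_\theta(W;X_e).
\]

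\emph{Step 3: Combine.} Steps 1 and 2 give $H(W)-I_\theta(W;X_e)\le 1+P_e H(W)$. Dividing by $H(W)>0$ and rearranging yields
\[
P_e\ge 1-\frac{I_\theta(W;X_e)+1}{H(W)}.
\]
Substituting $H(W)=B$ for uniform independent bits gives the second claim.

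The only delicate point is the general-$W$ case in Step 1. Bounding $H(W\mid\hat W,E=1)$ by $H(W)$ rather than by $\log|\mathcal W|$ does not hold pointwise. The clean route is to prove the inequality with $\log|\mathcal W|$, which equals $H(W)=B$ under the stated uniform assumption, and to note that this is the regime the theorem is applied in. Measurability issues for continuous $X_e$ are handled by the general definition of mutual information, under which data processing remains valid.
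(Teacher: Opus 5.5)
Your proposal is correct and follows the paper's proof essentially step for step. Both use Fano's inequality with $H_2(P_e)\le 1$ and $\log(|\mathcal W|-1)\le\log|\mathcal W|=H(W)$, where the paper likewise invokes uniformity of $W$ at exactly this point, then data processing on $W\to X_e\to\hat W$ and rearrangement. Your caveat about non-uniform $W$ is well placed: the first displayed inequality can genuinely fail without uniformity. For example, take $X_e$ independent of $W$, so $I_\theta=0$, and let $W$ have one atom of mass $1/2$ with the remaining mass spread uniformly over $16$ values. Then $H(W)=3$ and always guessing the heavy atom gives $P_e=1/2$, below the claimed bound of $2/3$. So the uniform assumption from the setup is essential, not just convenient.
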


For a fixed watermark encoder and message distribution, optimizing the decoder cannot exceed what $I_\theta$ already permits; \textit{SafeMark }therefore reparameterizes $\theta$ to raise $I_\theta$ directly.

\paragraph{From bit accuracy to mutual information.}
Let $\bar a:=\tfrac{1}{B}\sum_b\Pr[\hat W_b=W_b]$ denote average bit accuracy.

\begin{lemma}[Bit accuracy lower-bounds channel mutual information]
\label{lem:acc-mi}
Under the assumed independence of the bits of $W$,
\begin{equation}
\label{eq:acc-mi}
I_\theta(W;X_e)
\;\ge\;
H(W)\;-\;B\,H_2\!\left(1-\bar a(\theta,\phi)\right).
\end{equation}
With $W$ uniform on $\{0,1\}^B$, this becomes $I_\theta(W;X_e)\ge B\bigl(1-H_2(1-\bar a)\bigr)$ bits. \emph{(Proof in Appendix~\ref{app:theory}.)}
\end{lemma}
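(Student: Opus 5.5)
The plan is to pass from $X_e$ to the decoded message and then split the message into its bits. Since $W\to X_{\text{orig}}\to X_e\to\hat W$ is Markov, the data-processing inequality gives $I_\theta(W;X_e)\ge I(W;\hat W)=H(W)-H(W\mid\hat W)$. It therefore suffices to show
\[
H(W\mid\hat W)\le B\,H_2(1-\bar a).
\]

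\begin{enumerate}
\item \emph{Chain rule.} Expand $H(W\mid\hat W)=\sum_{b=1}^B H(W_b\mid W_{<b},\hat W)$.
\item \emph{Drop conditioning.} Conditioning reduces entropy, so each term is at most $H(W_b\mid \hat W_b)$. This yields $H(W\mid\hat W)\le\sum_b H(W_b\mid\hat W_b)$.
\item \emph{Per-bit Fano.} Each $W_b$ is binary, so Fano's inequality with alphabet size $2$ gives $H(W_b\mid\hat W_b)\le H_2(p_b)$, where $p_b:=\Pr[\hat W_b\ne W_b]=1-a_b$ and the $\log(|\mathcal W|-1)$ term vanishes.
\item \emph{Jensen.} $H_2$ is concave, so $\frac1B\sum_b H_2(p_b)\le H_2\!\left(\frac1B\sum_b p_b\right)=H_2(1-\bar a)$.
\end{enumerate}

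Combining these steps gives $I_\theta(W;X_e)\ge H(W)-B\,H_2(1-\bar a)$. In the uniform case $H(W)=B$, which gives the specialization $B\bigl(1-H_2(1-\bar a)\bigr)$.

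Independence of the bits is used only to identify $H(W)=\sum_b H(W_b)$ in the specialization. Steps 1–4 hold for any joint law of $W$. If desired, one could instead derive a per-bit version via $I(W;X_e)\ge\sum_b I(W_b;X_e)$, which does require independence; I would mention it as an alternative.

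The main obstacle is minor. $\hat W=\sigma(\mathcal D_\phi(X_e))$ must be read as hard-thresholded bits, so that $\hat W$ is a deterministic function of $X_e$ (or, more generally, one with independent randomness). That is what makes data processing apply. Beyond this, one only needs to check that Fano applies bitwise without assuming $\bar a\ge 1/2$, since Fano holds for any error probability.
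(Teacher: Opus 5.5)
Your proposal is correct and follows the paper's proof essentially step for step: data processing from $X_e$ to $\hat W$, the chain rule for $H(W\mid\hat W)$, dropping the conditioning down to $\hat W_b$, binary Fano for each bit, and Jensen on the concave $H_2$. Your side remark is also accurate: independence of the bits is never used in the paper's argument, and is needed only for the uniform specialization $H(W)=B$.
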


\paragraph{Bit accuracy as a tractable surrogate target.}
Enforcing $\bar a(\theta,\phi)\ge\tau$ implies
\begin{equation}
\label{eq:capacity-from-tau}
I_\theta(W;X_e)\;\ge\;B\bigl(1-H_2(1-\tau)\bigr)\ \text{bits},
\end{equation}
motivating bit-level training as a tractable proxy for the mutual information (MI) objective; for $\tau\in[1/2,1]$, the bound is monotone increasing in $\tau$ and saturates at the maximum $B$ bits when $\tau=1$.

\subsection{Hinge Penalty Relaxation}\label{sec:penalty}

Since the feasible set is defined through a non-linear multi-step denoising trajectory, solving Eq.~\eqref{eq:constrained_opt} directly is computationally prohibitive. \textit{SafeMark} instead relaxes the constraint via a one-sided hinge penalty on a Brier-type differentiable surrogate of hard bit accuracy:
\begin{equation}
\label{eq:soft-acc}
\widetilde{\mathrm{Acc}}(\hat{\mathbf{w}},\mathbf{w})\;:=\;\frac{1}{B}\sum_{b=1}^{B}\!\Big(1-(\hat w_b-w_b)^2\Big),
\qquad \hat w_b\in[0,1],
\end{equation}
where $\hat w_b=\sigma(\mathcal{D}_\phi(\mathbf{x}_{\text{edit}}))_b$. Combining with the semantic loss $\mathcal{L}_{\text{sem}}$ (instantiated in Section~\ref{sec:method}), the training objective is
\begin{align}
\mathcal{L}_{\text{total}}(\theta)
&= \lambda_{\text{sem}}\mathcal{L}_{\text{sem}} + \lambda_{\text{wm}}\mathcal{L}_{\text{wm}}, \label{eq:total} \\
\mathcal{L}_{\text{wm}}
&= \max\bigl(0,\;\tau-\widetilde{\mathrm{Acc}}(\hat{\mathbf{w}},\mathbf{w})\bigr). \label{eq:wm_hinge}
\end{align}

\paragraph{Calibration: from soft surrogate to hard accuracy.}
By a pointwise wrong-bit-count argument (Lemma~\ref{lem:calibration}, Appendix~\ref{app:theory}), training to $\widetilde{\mathrm{Acc}}\ge\tau$ implies hard-bit accuracy $\bar a\ge 4\tau-3$; combining with Lemma~\ref{lem:acc-mi} yields
\begin{equation}
\label{eq:soft-mi}
I_\theta(W;X_e)\;\ge\;B\bigl(1-H_2\bigl(\,4(1-\tau)\,\bigr)\bigr),\quad\text{whenever}\quad \tau\ge 7/8,
\end{equation}
so the soft hinge is the variational counterpart of Eq.~\eqref{eq:capacity-from-tau} for $\tau\ge 7/8$; at $\tau=1$, zero hinge loss is equivalent to perfect hard-bit recovery.

Once $\widetilde{\mathrm{Acc}}\ge\tau$ (inactive phase), the hinge contributes zero gradient and training reduces to $\lambda_{\text{sem}}\mathcal{L}_{\text{sem}}$ alone; under explicit non-vanishing-gradient and gradient-alignment assumptions on the active phase, gradient descent enters the inactive phase in finitely many steps (Theorem~\ref{thm:recovery}, Appendix~\ref{app:theory}).

\section{SafeMark}\label{sec:safemark}

\subsection{Methodology}
\label{sec:method}

The editor $\mathcal{E}_\theta$, decoder $\mathcal{D}_\phi$, edited output $\mathbf{x}_{\text{edit}}$, soft accuracy $\widetilde{\mathrm{Acc}}$, and hinge loss $\mathcal{L}_{\text{wm}}$ are as defined in Section~\ref{sec:problem} and Section~\ref{sec:penalty}; we specify the SafeMark-specific instantiation. An overview of the pipeline is illustrated in Fig.~\ref{fig:safemark_pipeline}.

\paragraph{Reference target for semantic supervision.}
To provide a stable semantic target without over-constraining the optimization, we use the \emph{frozen} pre-trained editor $\mathcal{E}_{\theta_0}$ (i.e., without our training updates) to produce a reference edit $\mathbf{x}_{\mathrm{ref}}=\mathcal{E}_{\theta_0}(\mathbf{x}_{\mathrm{orig}},\mathbf{p})$, capturing the intended editing direction.

\paragraph{Semantic alignment loss.}
Semantic consistency is enforced only on the final edited image by encouraging $\mathbf{x}_{\text{edit}}$ to remain close to the reference:
\begin{equation}
\label{eq:l1-sem}
\mathcal{L}_{\text{sem}}\;=\;\|\mathbf{x}_{\text{ref}}-\mathbf{x}_{\text{edit}}\|_{1}.
\end{equation}

\paragraph{Watermark decoding and soft accuracy.}
The frozen decoder $\mathcal{D}_\phi$ is applied to the final edited image $\mathbf{x}_{\text{edit}}$, and a sigmoid activation yields soft bit estimates $\hat{\mathbf{w}}=\sigma(\mathcal{D}_\phi(\mathbf{x}_{\text{edit}}))\in[0,1]^B$.
Since hard bit accuracy is non-differentiable, we adopt the Brier-type surrogate $\widetilde{\mathrm{Acc}}$ (Eq.~\eqref{eq:soft-acc}), which is smooth and admits gradient flow from the decoded bits back through the editor.

\paragraph{Overall training objective.}
The full objective is $\mathcal{L}_{\text{total}}=\lambda_{\text{sem}}\mathcal{L}_{\text{sem}}+\lambda_{\text{wm}}\mathcal{L}_{\text{wm}}$ as in Eq.~\eqref{eq:total}, with $\widetilde{\mathrm{Acc}}$ feeding the hinge $\mathcal{L}_{\text{wm}}$ in Eq.~\eqref{eq:wm_hinge}. Both $\lambda_{\text{sem}}$ and $\lambda_{\text{wm}}$ are tuned on a validation set; sensitivity to $\tau$ is analyzed in Appendix~\ref{app:ablation}. Gradients flow only through the final output $\mathbf{x}_{\text{edit}}$, with no supervision on intermediate denoising steps.

\setlength{\intextsep}{6pt}
\setlength{\columnsep}{8pt}
\begin{wrapfigure}{r}{0.55\textwidth}
\centering
\begin{minipage}{0.53\textwidth}
\footnotesize

\hrule height 0.8pt
\vspace{0.35em}
\refstepcounter{algorithm}\label{alg:safemark}%
\noindent\textbf{Algorithm~\thealgorithm{} \textit{SafeMark} Training Procedure}
\vspace{0.35em}
\hrule height 0.4pt
\vspace{0.45em}

\begin{algorithmic}[1]
\Require Trainable editor $\mathcal{E}_{\theta}$, frozen editor $\mathcal{E}_{\theta_0}$, decoder $\mathcal{D}_{\phi}$, triples $(\mathbf{x}_{\rm orig},\mathbf{w},\mathbf{p}_{\rm txt})$, weights $\lambda_{\rm sem},\lambda_{\rm wm}$, threshold $\tau$, step size $\eta$
\For{each minibatch $(\mathbf{x}_{\rm orig},\mathbf{w},\mathbf{p}_{\rm txt})$}
    \State $\mathbf{x}_{\rm ref}\gets\mathcal{E}_{\theta_0}(\mathbf{x}_{\rm orig},\mathbf{p}_{\rm txt})$
    \State $\mathbf{x}_{\rm edit}\gets\mathcal{E}_{\theta}(\mathbf{x}_{\rm orig},\mathbf{p}_{\rm txt})$
    \State $\mathcal{L}_{\rm sem}\gets\|\mathbf{x}_{\rm ref}-\mathbf{x}_{\rm edit}\|_1$
    \State $\hat{\mathbf{w}}\gets\sigma(\mathcal{D}_{\phi}(\mathbf{x}_{\rm edit}))$
    \State $\widetilde{\mathrm{Acc}}\gets B^{-1}\sum_{b=1}^{B}\!\left[1-(\hat{w}_b-w_b)^2\right]$
    \State $\mathcal{L}_{\rm wm}\gets\max(0,\tau-\widetilde{\mathrm{Acc}})$
    \State $\mathcal{L}_{\rm total}\gets\lambda_{\rm sem}\mathcal{L}_{\rm sem}+\lambda_{\rm wm}\mathcal{L}_{\rm wm}$
    \State $\theta\gets\theta-\eta\nabla_{\theta}\mathcal{L}_{\rm total}$
\EndFor
\end{algorithmic}

\vspace{0.35em}
\hrule height 0.4pt
\end{minipage}
\end{wrapfigure}

\subsection{Training Procedure}

The training procedure of \textit{SafeMark} is summarized in Algorithm~\ref{alg:safemark}.
For each minibatch, we first compute a fixed semantic reference $\mathbf{x}_{\text{ref}} = \mathcal{E}_{\theta_0}(\mathbf{x}_{\text{orig}}, \mathbf{p}_{\text{txt}})$ using the \emph{frozen} editor $\mathcal{E}_{\theta_0}$ (line~2), capturing the intended editing direction.
The trainable editor $\mathcal{E}_{\theta}$ then produces the final edited image $\mathbf{x}_{\text{edit}}$ without supervision at intermediate steps (line~3).
Semantic alignment encourages $\mathbf{x}_{\text{edit}}$ to remain close to the reference (line~4), while watermark supervision decodes the embedded watermark and computes bit accuracy (lines~5--6).
A thresholded hinge loss penalizes accuracy below $\tau$ (line~7), and the combined objective updates the editor parameters (lines~8--9).
Because the watermark loss is applied only to the final output $\mathbf{x}_{\text{edit}}$ and gradients are propagated through it, \textit{SafeMark} is compatible with differentiable diffusion-based editors and applies watermark supervision only at the final denoised output, requiring no architectural modification to the underlying editor.
At inference time, only the fine-tuned editor $\mathcal{E}_\theta$ is required; the frozen reference editor $\mathcal{E}_{\theta_0}$ and watermark decoder $\mathcal{D}_\phi$ are used exclusively during training and introduce no additional computational overhead at deployment.

\begin{figure*}[t]
    \centering
    \includegraphics[width=\textwidth]{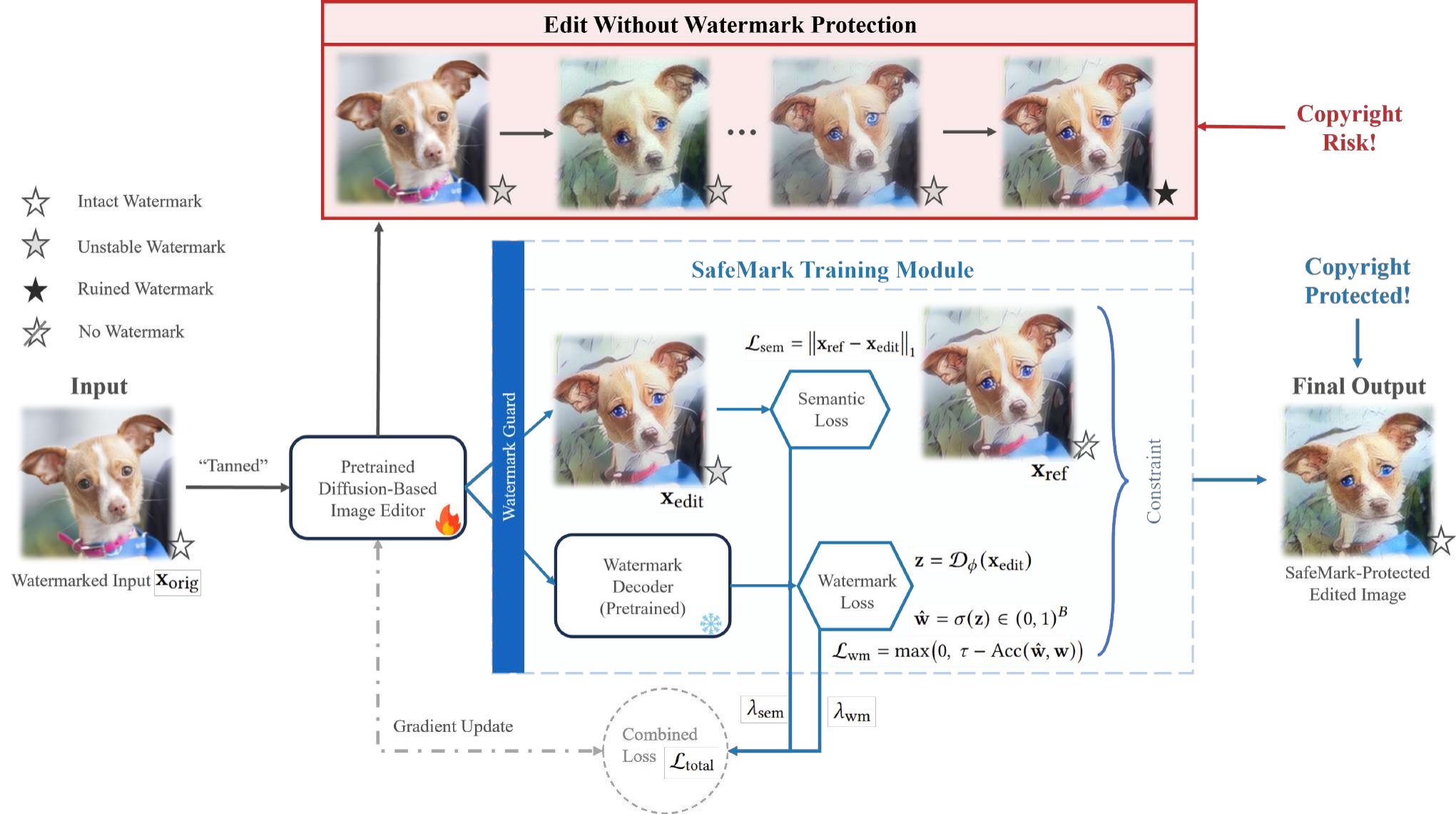}
    \caption{\textbf{Overview of \textit{SafeMark}.}
    Given a watermarked input image $\mathbf{x}_{\text{orig}}$ and an editing prompt $\mathbf{p}$, the diffusion editor produces the final edited image $\mathbf{x}_{\text{edit}}$.
    \textit{SafeMark} enforces (i) \emph{semantic alignment} to a reference edit $\mathbf{x}_{\text{ref}}=\mathcal{E}_{\theta,0}(\mathbf{x}_{\text{orig}},\mathbf{p})$ and
    (ii) \emph{watermark preservation} by decoding $\hat{\mathbf{w}}=\sigma(\mathcal{D}_\phi(\mathbf{x}_{\text{edit}}))$ and applying a thresholded loss when decoding accuracy drops below $\tau$.
    }
    \label{fig:safemark_pipeline}
\end{figure*}

\section{Evaluation}\label{sec:eva}


\subsection{Experiment Setup}\label{sec:eval_setup}
\noindent\textbf{Datasets.}
We evaluate watermark-preserving editing on both real images and AI-generated images.
For real images, we use LSUN-Church, LSUN-Bedroom~\cite{yu2015lsun}, CelebA~\cite{liu2015faceattributes}, and AFHQ-Dog~\cite{choi2020starganv2}, watermarked by HiDDeN~\cite{zhu2018hidden} and VINE~\cite{lu2024robust}; from each dataset, we sample 100 images and apply domain-valid semantic attributes.
For AI-generated images, we evaluate Stable Signature~\cite{fernandez2023stable} and SleeperMark~\cite{wang2025sleepermark} on Church, Bedroom, Human, and Dog images generated from fixed category prompts with different random seeds.
Dataset statistics and domain-specific editing rationales are provided in Appendix~\ref{app:dataset}, and protocol details are provided in Appendix~\ref{app:exp_details}.

\noindent\textbf{Baselines.} \label{sec:eval_baselines}
HiDDeN is a classical encoder-decoder scheme optimized against simulated channel noise; VINE explicitly targets diffusion-editing robustness via frequency-domain surrogate attacks; Stable Signature and SleeperMark embed watermarks into the Stable Diffusion generation pipeline.
All images are edited with {DiffusionCLIP}~\cite{kim2022diffusionclip}, {Asyrp}~\cite{kwon2022diffusion}, and {Eff-Diff}~\cite{starodubcev2023towards}.
These editors span fine-tuning-based domain transfer, semantic h-space attribute manipulation, and score-based efficient editing, covering a broad range of diffusion editing behaviors; full descriptions are in Appendix~\ref{app:background}.

\noindent\textbf{Metrics.}
We evaluate image quality with FID~\cite{heusel2017gans}, IS~\cite{salimans2016improved}, and CLIP cosine similarity~\cite{radford2021learning}, and evaluate watermark preservation using bit accuracy, $\text{Acc}=\frac{1}{N}\sum_{i=1}^{N}[\hat{b}_i=b_i]$, 
where $b_i$ and $\hat{b}_i$ are the embedded and decoded bits.
Higher IS, CLIP, and Acc indicate better image quality, semantic alignment, 
and watermark preservation, while lower FID indicates better realism.


\setlength{\intextsep}{6pt}
\setlength{\columnsep}{8pt}
\begin{wraptable}{r}{0.60\textwidth}
\centering
\captionsetup{font=small,skip=7pt}
\caption{Watermark bit accuracy after direct diffusion editing.}
\label{tab:baseline_result}
\small
\setlength{\tabcolsep}{4pt}
\renewcommand{\arraystretch}{1.02}
\begin{tabular}{llcccc}
\toprule
Editor & Set. & V-Ch. & V-Bd. & S-Ch. & S-Bd. \\
\midrule
\multicolumn{2}{l}{\textbf{Original}}
& 0.996 & 0.999 & 0.998 & 0.999 \\
\midrule
DiffCLIP & Mani & 0.951 & 0.941 & 0.977 & 0.951 \\
Asyrp & Mani & 0.708 & 0.299 & 0.919 & 0.798 \\
Eff-Diff & Mani & 0.621 & 0.586 & 0.619 & 0.904 \\
\bottomrule
\end{tabular}
\vspace{0.55em}

{\footnotesize \emph{V/S}: VINE/SleeperMark; \emph{Ch./Bd.}: Church/Bedroom.}

\end{wraptable}
\subsection{Limitations of Existing Robust Watermarking under Text-Guided Editing (RQ1)}\label{sec:eval_rq1}

Table~\ref{tab:baseline_result} reports bit accuracy for {VINE} and {SleeperMark}, both designed to improve robustness under editing, across three diffusion-based editors.
Although {VINE} achieves near-perfect accuracy on unedited images, it degrades sharply under semantic editors: under {Asyrp}, accuracy drops from $0.996$ to $0.708$ on Church and from $0.999$ to $0.299$ on Bedroom; under {Eff-Diff}, it drops to $0.621$ and $0.586$.
{SleeperMark} shows the same editor-dependent degradation on AI-generated images, confirming that existing robust watermarking methods do not provide reliable preservation across heterogeneous or unseen editing pipelines.
This editor-dependent failure highlights a fundamental limitation of the adversarial-defense paradigm: watermark robustness trained against a fixed distortion distribution cannot generalize to the open-ended space of semantic edits produced by heterogeneous diffusion pipelines.
These observations motivate \textit{SafeMark} that enforces watermark preservation directly within the editing objective.

\subsection{Watermark-Preserving Editing with \textit{SafeMark} (RQ2)}\label{sec:eval_rq2}
\begin{table*}[htb]
\centering
\caption{Comparison of image manipulation performance with and without \textit{SafeMark} on VINE-watermarked LSUN-Church and LSUN-Bedroom under three diffusion-based editing methods.
\emph{Original} denotes the original watermarked images;
\emph{Mani} denotes direct edits, and \emph{SafeMark} denotes \textit{SafeMark}-protected edits.}
\small
\setlength{\tabcolsep}{6pt}
\begin{tabular}{c c cccc cccc}
\toprule
Edit Method & Setting
& \multicolumn{4}{c}{LSUN-Church}
& \multicolumn{4}{c}{LSUN-Bedroom} \\
\cmidrule(lr){3-6} \cmidrule(lr){7-10}
& & Acc & FID & IS & CLIP & Acc & FID & IS & CLIP \\
\midrule
\multicolumn{2}{c}{\textbf{Original}}
& 0.996 & 0.65 & 2.65 & 0.597
& 0.999 & 0.54 & 2.55 & 0.722 \\
\midrule
\multirow{2}{*}{DiffusionCLIP}
& Mani
& 0.951 & 142.53 & 1.86 & 0.556
& 0.941 & 143.06 & 1.75 & 0.671 \\
& \textbf{SafeMark}
& 0.999 & 141.84 & 1.88 & 0.559
& 0.999 & 145.68 & 1.78 & 0.674 \\
\midrule
\multirow{2}{*}{Asyrp}
& Mani
& 0.708 & 191.36 & 3.36 & 0.515
& 0.299 & 116.88 & 4.52 & 0.693 \\
& \textbf{SafeMark}
& 1.000 & 193.96 & 3.48 & 0.513
& 1.000 & 117.82 & 4.54 & 0.690 \\
\midrule
\multirow{2}{*}{Eff-Diff}
& Mani
& 0.621 & 103.26 & 3.47 & 0.573
& 0.586 & 104.43 & 3.92 & 0.699 \\
& \textbf{SafeMark}
& 0.995 & 120.03 & 3.44 & 0.561
& 0.996 & 158.03 & 4.73 & 0.658 \\
\bottomrule
\end{tabular}
\label{tab:vine_result}
\end{table*}

\begin{table*}[htb]
\centering
\caption{Comparison of image manipulation performance with and without \textit{SafeMark} on HiDDeN-watermarked CelebA and AFHQ-Dog using three diffusion-based editing methods.}
\small
\setlength{\tabcolsep}{6pt}
\begin{tabular}{c c cccc cccc}
\toprule
Edit Method & Setting
& \multicolumn{4}{c}{CelebA}
& \multicolumn{4}{c}{AFHQ-Dog} \\
\cmidrule(lr){3-6} \cmidrule(lr){7-10}
& & Acc & FID & IS & CLIP & Acc & FID & IS & CLIP \\
\midrule
\multicolumn{2}{c}{\textbf{Original}}
& 0.958 & 17.24 & 2.13 & 0.575
& 0.969 & 15.32 & 2.06 & 0.797 \\
\midrule
\multirow{2}{*}{DiffusionCLIP}
& Mani
& 0.450 & 138.01 & 1.70 & 0.553
& 0.477 & 129.41 & 1.97 & 0.726 \\
& \textbf{SafeMark}
& 0.921 & 138.62 & 1.68 & 0.550
& 0.867 & 148.00 & 1.80 & 0.710 \\
\midrule
\multirow{2}{*}{Asyrp}
& Mani
& 0.466 & 183.67 & 2.52 & 0.511
& 0.445 & 292.71 & 3.54 & 0.592 \\
& \textbf{SafeMark}
& 1.000 & 185.20 & 2.49 & 0.512
& 1.000 & 292.70 & 3.53 & 0.593 \\
\midrule
\multirow{2}{*}{Eff-Diff}
& Mani
& 0.504 & 114.34 & 2.95 & 0.548
& 0.482 & 112.78 & 6.20 & 0.729 \\
& \textbf{SafeMark}
& 0.940 & 108.75 & 2.94 & 0.537
& 0.949 & 105.29 & 6.56 & 0.762 \\
\bottomrule
\end{tabular}
\label{tab:hidden_result}
\end{table*}

\begin{figure*}[!t]
  \centering
  \includegraphics[width=\textwidth]{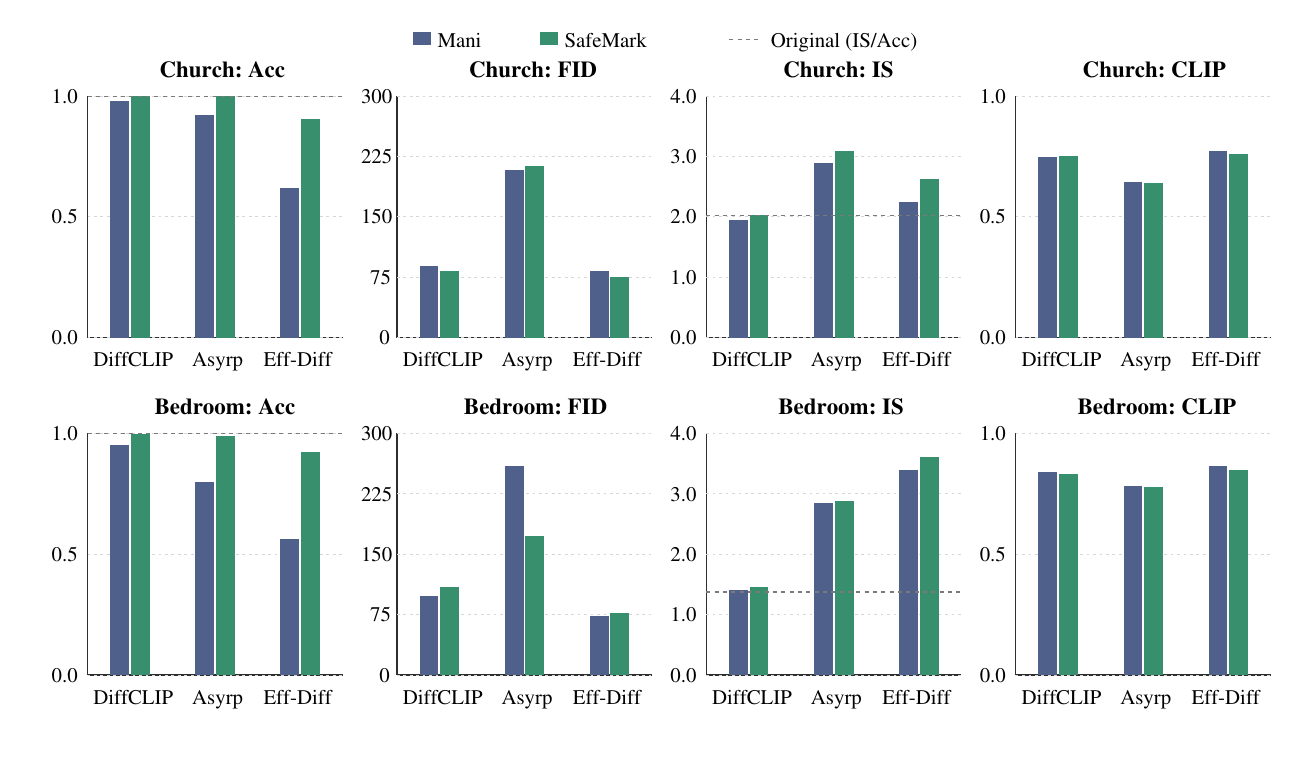}
\caption{Comparison of image manipulation performance with and without \textit{SafeMark} on SleeperMark-watermarked Church and Bedroom images using three diffusion-based editing methods.}
  \label{fig:sleep_result}
\end{figure*}

\begin{figure*}[!t]
  \centering
  \includegraphics[width=\textwidth]{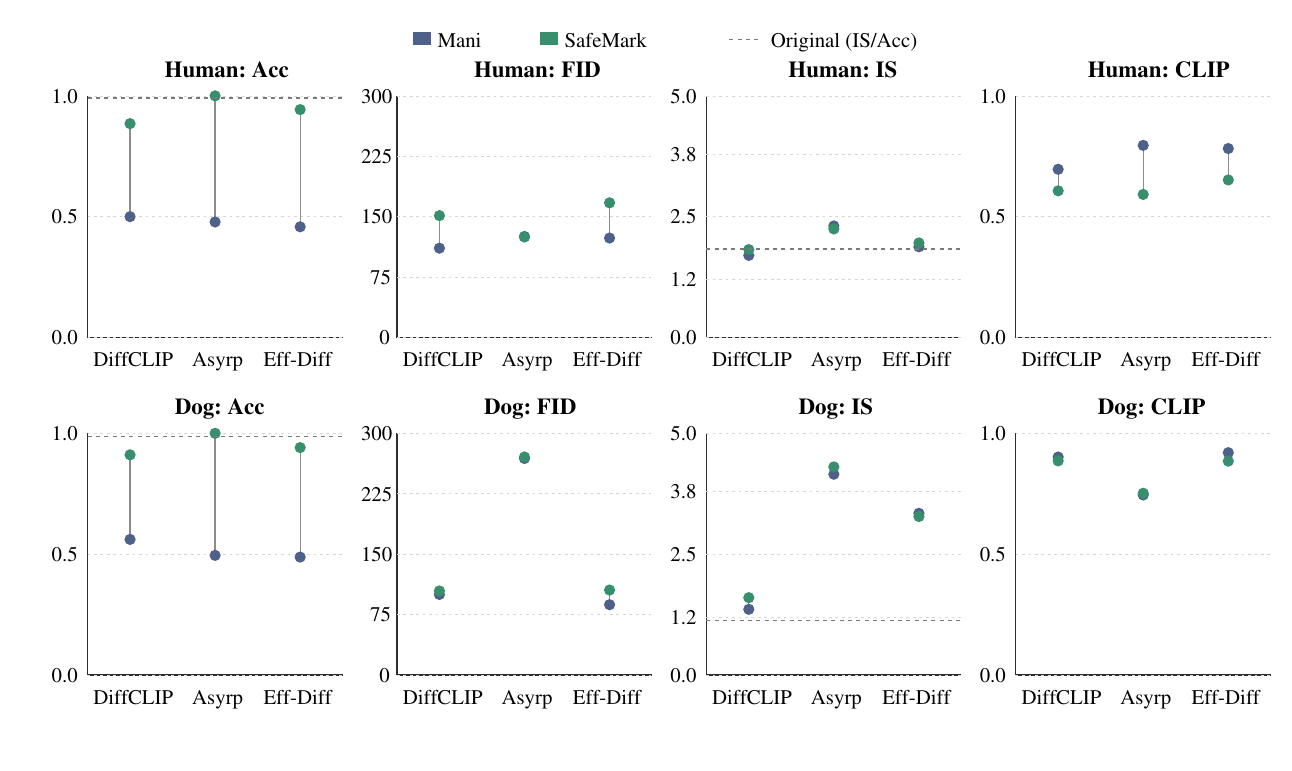}
\caption{Comparison of image manipulation performance with and without \textit{SafeMark} on \emph{Stable Signature}-watermarked Human and Dog images using three diffusion-based editing methods.}
  \label{fig:sig_result}
\end{figure*}

We investigate watermark preservation during editing using \textit{SafeMark} across heterogeneous text-guided image editing pipelines, where \emph{SafeMark} denotes edits on images protected by our method, \emph{Mani} denotes direct edits, and abbreviations \emph{Ch./Bd./Cel.}~=~Church/Bedroom/CelebA and \emph{Diff/Eff}~=~DiffCLIP/Eff-Diff are used throughout.
Across all datasets and editing pipelines, \textit{SafeMark} maintains high watermark decoding accuracy, even when existing robust watermarking methods fail under direct editing.
For real images watermarked by VINE and HiDDeN (Tables~\ref{tab:vine_result} and~\ref{tab:hidden_result}), \textit{SafeMark} preserves watermark accuracy close to $1.0$ across LSUN-Church, LSUN-Bedroom, CelebA, and AFHQ-Dog under DiffusionCLIP, Asyrp, and Eff-Diff.
\setlength{\intextsep}{6pt}
\setlength{\columnsep}{8pt}
\begin{wraptable}{r}{0.60\textwidth}
\centering
\captionsetup{font=small,skip=7pt}
\caption{Editing fidelity between \textit{SafeMark} and direct editing.}
\label{tab:comp_fidelity}
\small
\setlength{\tabcolsep}{2.4pt}
\renewcommand{\arraystretch}{0.96}
\begin{tabular}{lcc lcc}
\toprule
\multicolumn{3}{c}{VINE} & \multicolumn{3}{c}{HiDDeN} \\
\cmidrule(lr){1-3}\cmidrule(lr){4-6}
Data/Edit & FID$\downarrow$ & CLIP$\uparrow$ & Data/Edit & FID$\downarrow$ & CLIP$\uparrow$ \\
\midrule
Ch./Diff & 27.91 & 0.783 & Cel./Diff & 47.92 & 0.721 \\
Ch./Asyrp & 51.89 & 0.817 & Cel./Asyrp & 41.23 & 0.839 \\
Ch./Eff & 86.63 & 0.672 & Cel./Eff & 32.72 & 0.681 \\
Bd./Diff & 30.90 & 0.865 & Dog/Diff & 62.67 & 0.810 \\
Bd./Asyrp & 31.93 & 0.824 & Dog/Asyrp & 57.75 & 0.885 \\
Bd./Eff & 124.33 & 0.748 & Dog/Eff & 23.67 & 0.809 \\
\bottomrule
\end{tabular}
\end{wraptable}
For example, while direct editing causes the watermark accuracy of VINE-protected images to drop sharply under Asyrp (from $0.996$ to $0.708$ on Church and from $0.999$ to $0.299$ on Bedroom), \textit{SafeMark} restores the accuracy to $1.000$ in both cases; similar recovery trends are observed under Eff-Diff.
The same pattern holds for AI-generated images protected by SleeperMark and Stable Signature (Figures~\ref{fig:sleep_result} and~\ref{fig:sig_result}): \textit{SafeMark} reliably recovers watermark correctness across all tested editors despite the pronounced variability of direct editing, including semantically aggressive cases where \emph{Mani} drops below $0.5$.
Notably, this recovery is consistent across all three heterogeneous editors, confirming that the watermark-preserving objective generalizes across diffusion architectures and content domains.

The \emph{Comp} results in Table~\ref{tab:comp_fidelity} evaluate whether \textit{SafeMark} preserves the intended editing outcome instead of reverting toward the original image.
Across both real and AI-generated images, \emph{Comp} yields low FID values and high CLIP similarity scores, indicating that \textit{SafeMark} outputs remain perceptually and semantically close to direct edits while enforcing strict watermark preservation.
In particular, the FID and CLIP scores of \textit{SafeMark} outputs are consistently comparable to those of direct edits, with no systematic degradation across datasets or editing methods, confirming that the watermark supervision does not redirect the editor toward a collapsed or averaged output.
The overhead introduced by \textit{SafeMark} is thus confined to
the training stage and does not alter the qualitative character
of the editing result at inference time.
These results answer RQ2 by showing that watermark-safe 
image editing is achievable at the editing output: 
\textit{SafeMark} maintains watermark correctness across diverse text-guided image editing pipelines without compromising semantic manipulation (qualitative examples in Appendix~\ref{app:examples}; per-attribute variability in Appendix~\ref{app:variance}).

\begin{figure}[t]
    \centering
    \includegraphics[width=0.8\linewidth]{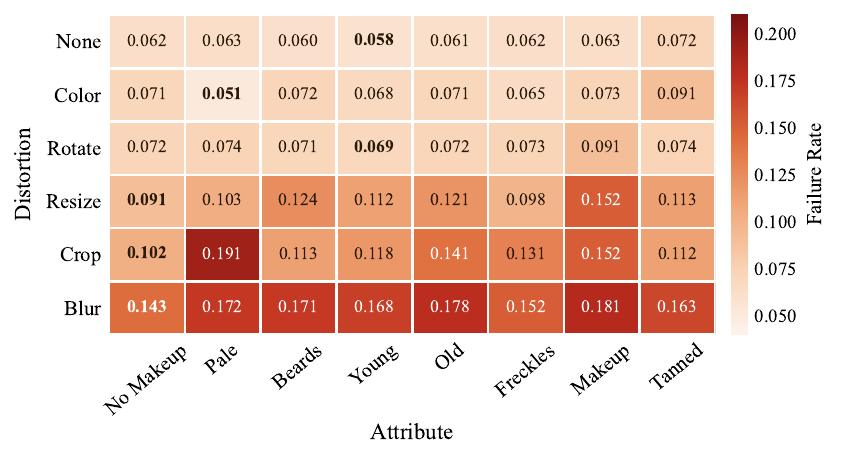}
    \caption{Failure heatmap of watermark preservation for \textit{SafeMark} under post-edit distortions.}
    \label{fig:distortion}
\end{figure}
\subsection{Post-Edit Distortion Robustness (RQ3)}\label{sec:eval_rq3}

We further examine \textit{SafeMark}'s robustness to common post-edit distortions using DiffusionCLIP on HiDDeN-watermarked CelebA images.
Figure~\ref{fig:distortion} visualizes the watermark failure rate ($\mathrm{WFR}=1-\mathrm{Acc}$) of \textit{SafeMark}-protected images across multiple post-edit distortions and semantic attributes.
Overall, \textit{SafeMark} demonstrates strong resistance to a wide range of post-edit distortions: across most attribute--distortion combinations, the failure rate remains relatively low, with the majority of cases falling below approximately $0.15$ and many close to the no-distortion baseline.
In particular, lightweight perturbations such as color adjustment and rotation introduce only minor degradation, indicating that the watermark signal enforced by \textit{SafeMark} is robust to moderate pixel-level and geometric changes.
More aggressive distortions, including cropping and blurring, naturally pose greater challenges, leading to higher failure rates for certain attributes; nevertheless, the degradation remains controlled and bounded without catastrophic collapse, and \textit{SafeMark} does not exhibit universal failure even under the most severe distortions.
Robustness variations are primarily driven by distortion type rather than semantic attribute: while some attributes show slightly higher sensitivity under particular distortions, no attribute consistently dominates the failure landscape, confirming that \textit{SafeMark} generalizes its robustness across diverse semantic content (see Appendix~\ref{app:attribute} for the attribute-wise CelebA breakdown).

\section{Limitations}\label{sec:limitations}

An important limitation lies in the extent of semantic modifications that \textit{SafeMark} can support while preserving watermark integrity. As edits become increasingly aggressive, such as drastic structural changes, complete object replacement, or transformations that fundamentally alter the scene layout, the watermark constraint may begin to conflict with the editing objective. The tipping point depends on factors including watermark bit capacity, semantic distance of edit, and the decoder’s tolerance to feature-level changes. Characterizing this boundary and developing adaptive trade-off strategies between edit fidelity and watermark preservation are important directions for future work.
Additionally, \textit{SafeMark} currently requires white-box access to the editor's parameters and a separate fine-tuning round for each target architecture, which may limit its deployment flexibility because the learned adaptations do not transfer across structurally distinct pipelines.



\textit{SafeMark} can be viewed as exposing a limitation of current watermarking techniques that preserve embedded signals even after substantial edits, potentially allowing significantly altered images to retain valid watermarks. This highlights a key weakness: watermark presence alone is insufficient to determine whether an image has been modified.
Addressing such watermark-preserving scenarios requires stronger mechanisms, such as binding watermarks to semantic content, edit histories, or tamper-evident provenance records. Developing these defenses is beyond the scope of this work and remains an open problem.
More broadly, this tension suggests that future provenance systems should pair watermark recovery with complementary signals, such as perceptual fingerprints or cryptographic edit logs, rather than relying on watermark presence alone to certify image authenticity.

\section{Conclusion}


We studied watermark preservation under text-guided image editing, where semantic manipulations can severely degrade embedded ownership signals despite visually plausible outputs. We showed that watermarking methods trained against fixed distortion models fail to generalize across diverse diffusion editors, revealing a fundamental bottleneck of the adversarial-defense paradigm. To address this, we introduced \textit{SafeMark}, which enforces watermark consistency directly within the editing objective. Our information-theoretic analysis shows that maintaining bit accuracy at the edited output lower-bounds the mutual information preserved by the editor channel, thereby controlling watermark recoverability. \textit{SafeMark} realizes this principle through a hinge-penalty objective that couples semantic fidelity with watermark decoding supervision at the final denoised output. Across real and AI-generated images, multiple watermarking schemes, and heterogeneous diffusion editors, \textit{SafeMark} achieves high watermark bit accuracy while maintaining edit fidelity and robustness to post-edit distortions. More broadly, our results show that watermark degradation is not inherent to semantic editing, but arises from unconstrained editing dynamics. This reframes provenance protection as an edit-time design problem and motivates co-designing watermarking systems with editing pipelines and tamper-aware objectives.

{
\small
\bibliographystyle{unsrtnat}
\bibliography{references}
}


\appendix

\section{Background}
\label{app:background}

\subsection{Diffusion-Based Image Editing}
Diffusion models have not only revolutionized image synthesis but have also reshaped image editing by providing a flexible, semantically rich generative prior. Traditional editing techniques, ranging from Photoshop-style retouching to GAN-based semantic manipulation \cite{pan2023drag,ling2021editgan}, often suffer from limited controllability for complex edits. In contrast, diffusion-based image editing leverages the iterative denoising process as a powerful mechanism for performing highly aligned and coherent edits. This paradigm shift enables users to modify images using intuitive text prompts without requiring specialized skills. As a result, diffusion-based editing has become a central capability of modern generative AI systems and serves as the backbone of many commercial tools.
The foundation of diffusion-based editing lies in the stochastic forward–reverse diffusion process, which encodes an image into noise and then regenerates it guided by a text prompt. Diffusion models maintain a rich, multi-scale representation of content throughout sampling, allowing them to preserve fine textures, spatial layouts, and object boundaries even under drastic edits. Approaches such as DDIM inversion \cite{kim2022diffusionclip}, SDEdit \cite{meng2021sdedit}, and InstructPix2Pix \cite{brooks2023instructpix2pix} provide pathways to reconstruct or approximate the latent trajectory of an existing image so the model can manipulate it while retaining fidelity to the original scene.

The increasing popularity of diffusion editing additionally raises concerns related to authenticity, provenance, and security. Editing capabilities can be used for benign creative applications but may also enable deceptive manipulations such as deepfakes, misinformation, and malicious content alteration. Identity preservation during editing becomes a double-edged sword: while useful for personalization, it also magnifies privacy risks. Consequently, watermarking, edit traceability, provenance metadata, and edit accountability mechanisms are becoming essential complements to diffusion-based editing pipelines.
As diffusion models continue to improve in fidelity, semantic alignment, and editability, diffusion-based image editing has evolved into a core research area bridging computer vision, human–AI interaction, and security. It forms the technological backbone of modern generative ecosystems, supporting workflows in design, entertainment, and digital content creation. Ongoing research seeks to make diffusion-based editing more precise, controllable, robust to prompt variations, and resilient to misuse, ultimately pushing toward trustworthy, user-aligned, and high-fidelity image editing systems.

\subsection{Watermarking on Diffusion Models}

The advances in text-to-image generation, driven by models such as Stable Diffusion \cite{rombach2022high}, Imagen \cite{saharia2022photorealistic}, and DALL-E2 \cite{ramesh2022hierarchical}, have achieved unprecedented levels of photorealism and semantic alignment, but they also raise profound concerns regarding responsibility, copyright protection, and model misuse. Image watermarking \cite{zhang2025omniguard} has emerged as a critical solution for enabling trustworthy and accountable image generation. Watermarking diffusion models, however, presents challenges that fundamentally differ from those encountered in traditional image watermarking. Watermark embedding in diffusion models occurs through a multi-step stochastic denoising process, during which latent representations undergo iterative transformations. The high dimensionality and inherent instability of diffusion trajectories make watermark signals susceptible to distortion during sampling. Furthermore, the adoption of latent diffusion architectures, where generation and editing operate in a compressed latent space rather than pixel space, complicates watermark injection and extraction. In this setting, both attacks and defenses \cite{wen2023tree,zhang2024attack} can be executed directly in the latent domain, rendering conventional spatial-domain and frequency-domain watermarking techniques increasingly insufficient for modern generative models.

Recent works have therefore explored generation-integrated watermarking, where watermarks are embedded not only in the final image but also within the generative process itself. Techniques such as Stable Signature \cite{fernandez2023stable} and Tree-Ring Watermarking \cite{wen2023tree} leverage the structure of the diffusion sampling pipeline to enforce signal propagation into the final output. In these approaches, watermarks may be injected into the initial noise, guided through classifier-free diffusion, or encoded into specialized model components. Such methods aim to maintain watermark recoverability against manipulations including image editing, style transfer, and adversarial removal. Meanwhile, watermarking diffusion models must satisfy multiple competing requirements: robustness to common and generative transformations, imperceptibility and efficiency for large-scale deployment, traceability for provenance standards such as C2PA \cite{c2pa2023spec}, and resilience in open-source ecosystems where model weights, sampling pipelines, or watermarking components may be modified.

\section{Real Image Dataset Details}
\label{app:dataset}

\textbf{CelebA}~\cite{liu2015faceattributes} contains over 200K celebrity face images annotated with rich attribute labels.
The dataset exhibits substantial diversity in facial appearance, pose, expression, illumination conditions, and background clutter.
Such variations make CelebA a standard benchmark for portrait editing, identity-preserving generation, and fine-grained semantic manipulation.
From a watermarking perspective, the strong semantic consistency of facial structures, together with localized attribute edits (e.g., hair, makeup, or facial expression), poses challenges for preserving imperceptible yet robust watermark signals under text-guided editing.

\textbf{AFHQ-Dog}~\cite{choi2020starganv2} provides high-resolution dog images spanning a wide range of breeds, fur textures, colors, and shapes.
Compared to human faces, dog images exhibit larger intra-class variability and less rigid structural regularity, which stresses the generalization capability of watermark embedding and decoding mechanisms.
Semantic edits on AFHQ-Dog often involve global texture and shape changes, making it a challenging testbed for evaluating watermark robustness under substantial appearance transformations.

\textbf{LSUN-Church} and \textbf{LSUN-Bedroom}~\cite{yu2015lsun} consist of large-scale scene-level imagery with complex spatial layouts.
Church scenes typically contain intricate architectural structures, repeated geometric patterns, and long-range spatial dependencies, while bedroom images feature diverse object compositions, lighting conditions, and viewpoint variations.
These scene-centric datasets introduce additional difficulty for watermark preservation, as text-guided editing may induce global structural reorganization rather than localized attribute modification.
Evaluating on LSUN-Church and LSUN-Bedroom therefore enables a comprehensive assessment of watermark robustness under large-scale semantic and spatial edits.

\section{Experimental Details and Coverage Rationale}
\label{app:exp_details}

\subsection{Implementation Details}
Unless otherwise stated, we enforce a strict watermark preservation requirement by setting the target watermark-accuracy threshold to $\tau = 1.0$.
The optimization objective balances semantic fidelity and watermark integrity using trade-off coefficients $\lambda_{sem} = 3.0$ and $\lambda_{wm} = 1.0$, which are fixed across all experiments.
All experiments are optimized using Adam~\cite{kingma2014adam} with a learning rate of $8 \times 10^{-6}$, together with a StepLR scheduler (step size = 1, $\gamma = 1.3$) to ensure stable convergence.
\textit{SafeMark} is implemented in PyTorch and evaluated on an NVIDIA RTX 6000 GPU (48\,GB VRAM). Approximate per-attribute runtimes on a single such GPU are 2--3 hours for DiffusionCLIP and Asyrp, and approximately 1 hour for Eff-Diff-Edit.
All editors are used in their original configurations, and \textit{SafeMark} introduces no editor-specific architectural changes.
To ensure reproducibility, all experiments are conducted with fixed random seeds unless explicitly stated.
Results on real-image benchmarks are annotated using the corresponding dataset names for clarity.

\subsection{Hyperparameter Selection and Protocol Design}
Our evaluation is designed to test watermark-preserving editing under realistic variation rather than a single curated edit type.
We combine four visual domains, multiple watermarking mechanisms, and three heterogeneous diffusion editors, so that the edited images include both localized attribute changes and global scene-level transformations.

We select $\lambda_{sem}$ and $\lambda_{wm}$ on a held-out validation split by requiring watermark accuracy to reach the target threshold while keeping the protected edit close to the corresponding direct edit.
After this selection, the same coefficients are used for all datasets, watermarking schemes, and editing pipelines, so the reported results do not rely on dataset-specific or editor-specific tuning.
The threshold $\tau$ is set to $1.0$ in the main experiments to evaluate the strictest preservation setting, and Appendix~\ref{app:ablation} reports a threshold ablation showing that watermark accuracy improves monotonically with $\tau$ while image quality and CLIP similarity remain stable.

The number of semantic attributes differs across datasets because each benchmark supports a different set of meaningful and valid edits.
For example, CelebA contains many localized facial attributes, whereas LSUN scene datasets and AFHQ-Dog support fewer semantically coherent prompt edits without producing unrealistic or ill-defined targets.
We therefore use all valid semantic attributes curated for each domain rather than forcing an equal number of attributes across datasets.
This design avoids comparing datasets under artificial edits that are not meaningful for their visual domain.

For watermark payloads, we follow the default payload settings of each watermarking method, including HiDDeN, VINE, Stable Signature, and SleeperMark.
This choice keeps the evaluation aligned with each method's intended operating point and avoids weakening or strengthening a baseline through non-standard payload choices.
For AIGC experiments, we fix one source prompt per semantic category and generate 100 images with different random seeds, which isolates the effect of editing while preserving diversity in generated content.
For real images, each dataset contains 100 randomly sampled watermarked inputs, and edits are evaluated over the available semantic attributes for that domain.
This yields $100 \times 47$, $100 \times 34$, $100 \times 30$, and $100 \times 32$ edited samples for LSUN-Church, LSUN-Bedroom, CelebA, and AFHQ-Dog, respectively, before aggregation.

Our prompt and edit coverage is designed to span both localized and global semantic changes.
Face edits include identity-adjacent, hairstyle, expression, and appearance changes; dog edits emphasize texture, color, and shape variations; and scene edits cover global architectural or layout-level transformations.
Together with three heterogeneous diffusion editors, four visual domains, and multiple watermarking mechanisms, this protocol evaluates whether watermark-preserving editing remains reliable across diverse practical editing conditions rather than a single curated prompt family.

\section{Attribute-wise Analysis of DiffusionCLIP Editing on CelebA}
\label{app:attribute}

\begin{table}[t]
\centering
\caption{DiffusionCLIP editing performance on CelebA, averaged over attribute categories.
\emph{Orig FID} denotes the average FID between the original watermarked images and directly edited images (\emph{Mani}).
\emph{FID} denotes the average FID between the direct edits (\emph{Mani}) and the corresponding \textit{SafeMark}-protected edits (\emph{SafeMark}).
IS, CLIP similarity, and Acc report the image quality, semantic alignment, and watermark decoding accuracy of \textit{SafeMark}-protected results, respectively.}
\label{tab:diffusionclip_celeba_feature}
\small
\setlength{\tabcolsep}{4pt}
\begin{tabular}{lccccc}
\toprule
\textbf{Attribute Category} & \textbf{Orig FID} & \textbf{FID} & \textbf{IS} & \textbf{CLIP} & \textbf{Acc} \\
\midrule
Celebrities & 108.42 & 37.88 & 1.68 & 0.713 & 0.914 \\
Hair        & 144.46 & 44.59 & 1.60 & 0.772 & 0.913 \\
Style       & 193.38 & 45.05 & 1.68 & 0.723 & 0.925 \\
Appearance  & 123.45 & 49.36 & 1.65 & 0.725 & 0.928 \\
Fantasy     & 143.71 & 53.90 & 1.67 & 0.706 & 0.912 \\
Emotion     & 104.37 & 55.18 & 1.80 & 0.695 & 0.929 \\
\bottomrule
\end{tabular}
\end{table}

Table~\ref{tab:diffusionclip_celeba_feature} reports the DiffusionCLIP editing results on CelebA, where performance is averaged over different attribute categories.
The \emph{Orig FID} values indicate that different attributes induce varying degrees of semantic change under direct editing, with style-related and fantasy attributes exhibiting larger distribution shifts, while identity- or emotion-related edits result in relatively smaller changes.
In contrast, the \emph{FID} between direct edits and \textit{SafeMark}-protected results remains consistently low across all categories, indicating that \textit{SafeMark} introduces only limited additional deviation while enforcing watermark preservation.

Across all attribute categories, \textit{SafeMark} maintains high image quality and semantic alignment, as reflected by stable IS and CLIP similarity scores.
More importantly, watermark decoding accuracy consistently exceeds $0.91$, even for attributes involving substantial appearance or style changes.
These results suggest that \textit{SafeMark} preserves watermark integrity uniformly across diverse semantic edit types, without being biased toward specific attribute categories or editing difficulty levels.

\section{Ablation Study}
\label{app:ablation}

We conduct an ablation study to analyze the effect of the watermark threshold $\tau$ in \textit{SafeMark} under DiffusionCLIP-based editing on CelebA, as shown in Figure~\ref{fig:threshold_results}.
The threshold $\tau$ controls the strictness of watermark preservation during editing and thus governs the trade-off between watermark robustness and editing flexibility.
As expected, watermark decoding accuracy (\emph{Acc}) increases monotonically with larger values of $\tau$, reaching near-perfect performance at higher thresholds.
This confirms that $\tau$ effectively serves as a control parameter for enforcing stronger watermark preservation.

Importantly, increasing $\tau$ does not significantly compromise editing quality.
Across the full range of thresholds, perceptual metrics such as FID and CLIP similarity remain largely stable, with only minor variations.
In particular, the \emph{Comp} results show that \textit{SafeMark}-edited images remain visually and semantically close to their directly edited counterparts even under stricter watermark constraints.
These results demonstrate that \textit{SafeMark} enables strong and tunable watermark robustness while preserving high-quality and semantically faithful image manipulation.

\begin{figure}[t]
  \centering
  \begin{subfigure}[t]{0.4\columnwidth}
    \centering
    \includegraphics[width=\linewidth]{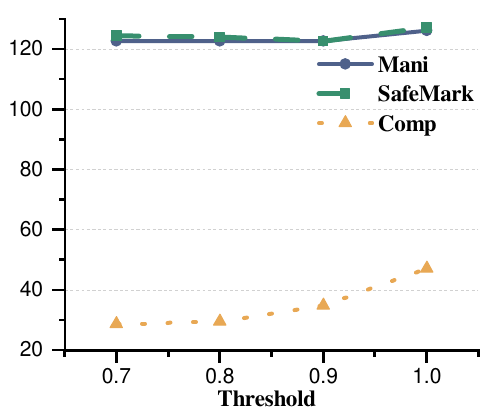}
    \caption{FID$\downarrow$}
    \label{fig:panel_a}
  \end{subfigure}\hfill
  \begin{subfigure}[t]{0.4\columnwidth}
    \centering
    \includegraphics[width=\linewidth]{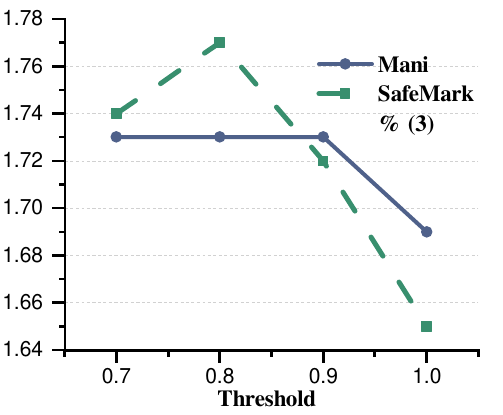}
    \caption{IS$\uparrow$}
    \label{fig:panel_b}
  \end{subfigure}
  \vspace{4pt}
  \begin{subfigure}[t]{0.4\columnwidth}
    \centering
    \includegraphics[width=\linewidth]{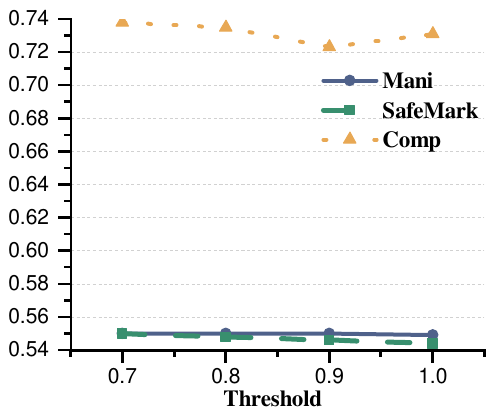}
    \caption{CLIP$\uparrow$}
    \label{fig:panel_c}
  \end{subfigure}\hfill
  \begin{subfigure}[t]{0.4\columnwidth}
    \centering
    \includegraphics[width=\linewidth]{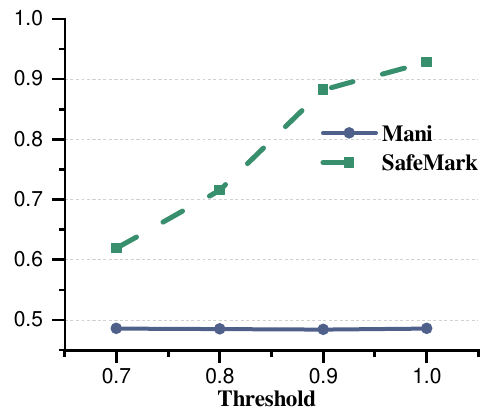}
    \caption{Acc$\uparrow$}
    \label{fig:panel_d}
  \end{subfigure}
  \caption{Ablation study of the watermark threshold $\tau$ for \textsc{SafeMark} under \textsc{DiffusionCLIP}-based editing on CelebA. We report FID, IS, CLIP similarity, and watermark decoding accuracy (Acc) to illustrate the trade-off between editing quality and watermark robustness.}
  \label{fig:threshold_results}
\end{figure}

\section{Variability Across Semantic Attributes}\label{app:variance}

We complement the point estimates in the main paper with per-attribute mean and
standard deviation, treating each semantic attribute (the editing target) as
the unit of repetition while keeping the watermarking method and editor fixed.
For every (dataset, watermark, editor) cell, each attribute provides 100
edited images on which we compute, per attribute: (i)~bit accuracy as the mean
over the 100 images, (ii)~FID against the same 100 watermarked source images,
(iii)~Inception Score on the edited set, and (iv)~the average pairwise CLIP
cosine similarity between the source and edited sets. We then report
$\text{mean}_{\pm 1\sigma}$ across $N$ attributes per dataset, with $N\!=\!47$
for CelebA, $N\!=\!34$ for AFHQ-Dog, $N\!=\!32$ for LSUN-Bedroom, and
$N\!=\!30$ for LSUN-Church. The per-attribute FID and IS are computed on $100$
image pairs each and are therefore intrinsically noisier than the corpus-level
estimates in Tables~\ref{tab:vine_result}--\ref{tab:hidden_result} and
Figures~\ref{fig:sleep_result}--\ref{fig:sig_result}; the per-attribute
standard deviation should be read as a measure of how the editing target
shifts the visual distribution, rather than as the uncertainty of the
main-paper numbers. Across all editors and datasets, the
\emph{Mani}~vs~\emph{SafeMark} gap in watermark accuracy is many standard
deviations wide, confirming that watermark preservation under
\textit{SafeMark} is not driven by a few favorable attributes.

For cells marked with~$\dagger$, the per-attribute SafeMark outputs were not
retrievable from disk at the time of submission; we therefore reproduce the
corresponding corpus-level point estimates from the main paper, without
dispersion. These daggered values are not directly comparable to the
per-attribute means in adjacent rows for FID, IS, and CLIP (corpus-level
versus per-attribute aggregation), but the bit-accuracy values share the same
aggregation semantics. Per-attribute Mani values for the same editors are
included in full and match the corresponding main-table point estimates.

\newcommand{\valpa}[2]{\ensuremath{#1_{\pm #2}}}
\newcommand{\valdag}[1]{\ensuremath{#1^{\dagger}}}

\begin{table}[t]
\centering
\caption{Per-attribute mean$_{\pm\sigma}$ for VINE-watermarked LSUN-Church
($N\!=\!30$) and LSUN-Bedroom ($N\!=\!32$). Companion to
Table~\ref{tab:vine_result}.}
\label{tab:variance_vine}
\vspace{4pt}
\setlength{\tabcolsep}{2.4pt}
\renewcommand{\arraystretch}{1.05}
\resizebox{\columnwidth}{!}{%
\begin{tabular}{c c cccc cccc}
\toprule
Editor & Setting
& \multicolumn{4}{c}{LSUN-Church}
& \multicolumn{4}{c}{LSUN-Bedroom} \\
\cmidrule(lr){3-6} \cmidrule(lr){7-10}
& & Acc & FID & IS & CLIP & Acc & FID & IS & CLIP \\
\midrule
\multirow{2}{*}{DiffusionCLIP}
& Mani
& \valpa{0.951}{0.036} & \valpa{143.12}{64.10} & \valpa{1.86}{0.37} & \valpa{0.560}{0.030}
& \valpa{0.941}{0.038} & \valpa{142.50}{36.74} & \valpa{1.76}{0.23} & \valpa{0.676}{0.041} \\
& SafeMark
& \valpa{0.999}{0.000} & \valpa{142.48}{63.65} & \valpa{1.88}{0.39} & \valpa{0.565}{0.029}
& \valpa{0.999}{0.000} & \valpa{145.68}{37.23} & \valpa{1.78}{0.25} & \valpa{0.679}{0.038} \\
\midrule
\multirow{2}{*}{Asyrp}
& Mani
& \valpa{0.708}{0.082} & \valpa{191.36}{64.86} & \valpa{1.91}{0.53} & \valpa{0.520}{0.031}
& \valpa{0.299}{0.112} & \valpa{116.89}{26.70} & \valpa{2.07}{0.15} & \valpa{0.698}{0.018} \\
& SafeMark
& \valdag{1.000} & \valdag{193.96} & \valdag{3.48} & \valdag{0.513}
& \valdag{1.000} & \valdag{117.82} & \valdag{4.54} & \valdag{0.690} \\
\midrule
\multirow{2}{*}{Eff-Diff}
& Mani
& \valpa{0.621}{0.010} & \valpa{105.24}{46.90} & \valpa{2.07}{0.40} & \valpa{0.577}{0.021}
& \valpa{0.586}{0.013} & \valpa{104.44}{26.04} & \valpa{2.06}{0.25} & \valpa{0.704}{0.023} \\
& SafeMark
& \valpa{0.995}{0.011} & \valpa{119.95}{32.62} & \valpa{2.11}{0.27} & \valpa{0.566}{0.017}
& \valpa{0.996}{0.005} & \valpa{158.04}{43.78} & \valpa{2.21}{0.20} & \valpa{0.662}{0.032} \\
\bottomrule
\end{tabular}%
}
\end{table}

\begin{table}[t]
\centering
\caption{Per-attribute mean$_{\pm\sigma}$ for HiDDeN-watermarked CelebA
($N\!=\!47$) and AFHQ-Dog ($N\!=\!34$). Companion to
Table~\ref{tab:hidden_result}.}
\label{tab:variance_hidden}
\vspace{4pt}
\setlength{\tabcolsep}{2.4pt}
\renewcommand{\arraystretch}{1.05}
\resizebox{\columnwidth}{!}{%
\begin{tabular}{c c cccc cccc}
\toprule
Editor & Setting
& \multicolumn{4}{c}{CelebA}
& \multicolumn{4}{c}{AFHQ-Dog} \\
\cmidrule(lr){3-6} \cmidrule(lr){7-10}
& & Acc & FID & IS & CLIP & Acc & FID & IS & CLIP \\
\midrule
\multirow{2}{*}{DiffusionCLIP}
& Mani
& \valpa{0.450}{0.054} & \valpa{137.93}{42.31} & \valpa{1.71}{0.18} & \valpa{0.559}{0.020}
& \valpa{0.477}{0.032} & \valpa{128.89}{60.84} & \valpa{1.97}{0.25} & \valpa{0.733}{0.047} \\
& SafeMark
& \valpa{0.921}{0.034} & \valpa{138.49}{42.58} & \valpa{1.68}{0.17} & \valpa{0.557}{0.023}
& \valpa{0.867}{0.045} & \valpa{147.04}{50.34} & \valpa{1.80}{0.18} & \valpa{0.723}{0.045} \\
\midrule
\multirow{2}{*}{Asyrp}
& Mani
& \valpa{0.466}{0.017} & \valpa{190.63}{28.02} & \valpa{1.77}{0.14} & \valpa{0.531}{0.025}
& \valpa{0.444}{0.019} & \valpa{293.95}{66.11} & \valpa{1.74}{0.29} & \valpa{0.594}{0.071} \\
& SafeMark
& \valdag{1.000} & \valdag{185.20} & \valdag{2.49} & \valdag{0.512}
& \valdag{1.000} & \valdag{292.70} & \valdag{3.53} & \valdag{0.593} \\
\midrule
\multirow{2}{*}{Eff-Diff}
& Mani
& \valpa{0.502}{0.049} & \valpa{116.38}{32.79} & \valpa{1.83}{0.26} & \valpa{0.562}{0.016}
& \valpa{0.482}{0.033} & \valpa{112.35}{51.58} & \valpa{2.02}{0.27} & \valpa{0.738}{0.044} \\
& SafeMark
& \valpa{0.941}{0.037} & \valpa{110.55}{26.84} & \valpa{1.83}{0.24} & \valpa{0.554}{0.018}
& \valpa{0.949}{0.017} & \valpa{104.36}{45.35} & \valpa{1.97}{0.23} & \valpa{0.770}{0.035} \\
\bottomrule
\end{tabular}%
}
\end{table}

\begin{table}[t]
\centering
\caption{Per-attribute mean$_{\pm\sigma}$ for SleeperMark-watermarked
LSUN-Church ($N\!=\!30$) and LSUN-Bedroom ($N\!=\!32$). Companion to
Figure~\ref{fig:sleep_result}.}
\label{tab:variance_sleeper}
\vspace{4pt}
\setlength{\tabcolsep}{2.4pt}
\renewcommand{\arraystretch}{1.05}
\resizebox{\columnwidth}{!}{%
\begin{tabular}{c c cccc cccc}
\toprule
Editor & Setting
& \multicolumn{4}{c}{LSUN-Church}
& \multicolumn{4}{c}{LSUN-Bedroom} \\
\cmidrule(lr){3-6} \cmidrule(lr){7-10}
& & Acc & FID & IS & CLIP & Acc & FID & IS & CLIP \\
\midrule
\multirow{2}{*}{DiffusionCLIP}
& Mani
& \valpa{0.977}{0.026} & \valpa{92.58}{54.30} & \valpa{1.94}{0.25} & \valpa{0.750}{0.054}
& \valpa{0.951}{0.060} & \valpa{110.06}{27.16} & \valpa{1.41}{0.18} & \valpa{0.841}{0.033} \\
& SafeMark
& \valpa{0.998}{0.001} & \valpa{85.31}{39.79} & \valpa{2.04}{0.21} & \valpa{0.751}{0.042}
& \valpa{0.999}{0.001} & \valpa{123.94}{31.71} & \valpa{1.45}{0.18} & \valpa{0.833}{0.031} \\
\midrule
\multirow{2}{*}{Asyrp}
& Mani
& \valpa{0.919}{0.039} & \valpa{210.42}{70.10} & \valpa{1.77}{0.35} & \valpa{0.642}{0.060}
& \valpa{0.798}{0.120} & \valpa{188.25}{73.69} & \valpa{1.47}{0.18} & \valpa{0.784}{0.063} \\
& SafeMark
& \valdag{1.000} & \valdag{213.01} & \valdag{3.09} & \valdag{0.641}
& \valdag{0.988} & \valdag{173.10} & \valdag{2.89} & \valdag{0.779} \\
\midrule
\multirow{2}{*}{Eff-Diff}
& Mani
& \valpa{0.619}{0.010} & \valpa{89.45}{55.11} & \valpa{2.09}{0.21} & \valpa{0.769}{0.051}
& \valpa{0.561}{0.010} & \valpa{90.28}{16.02} & \valpa{1.34}{0.07} & \valpa{0.865}{0.018} \\
& SafeMark
& \valpa{0.904}{0.016} & \valpa{84.13}{33.63} & \valpa{2.16}{0.15} & \valpa{0.758}{0.037}
& \valpa{0.922}{0.040} & \valpa{100.71}{15.16} & \valpa{1.36}{0.07} & \valpa{0.844}{0.016} \\
\bottomrule
\end{tabular}%
}
\end{table}

\begin{table}[t]
\centering
\caption{Per-attribute mean$_{\pm\sigma}$ for Stable Signature-watermarked
generated images. \emph{Human}=CelebA-style faces ($N\!=\!47$),
\emph{Dog}=AFHQ-Dog-style images ($N\!=\!34$). Companion to
Figure~\ref{fig:sig_result}.}
\label{tab:variance_stablesig}
\vspace{4pt}
\setlength{\tabcolsep}{2.4pt}
\renewcommand{\arraystretch}{1.05}
\resizebox{\columnwidth}{!}{%
\begin{tabular}{c c cccc cccc}
\toprule
Editor & Setting
& \multicolumn{4}{c}{Human}
& \multicolumn{4}{c}{Dog} \\
\cmidrule(lr){3-6} \cmidrule(lr){7-10}
& & Acc & FID & IS & CLIP & Acc & FID & IS & CLIP \\
\midrule
\multirow{2}{*}{DiffusionCLIP}
& Mani
& \valpa{0.502}{0.035} & \valpa{121.53}{49.00} & \valpa{1.72}{0.12} & \valpa{0.643}{0.034}
& \valpa{0.562}{0.041} & \valpa{111.41}{68.01} & \valpa{1.37}{0.19} & \valpa{0.842}{0.074} \\
& SafeMark
& \valpa{0.886}{0.023} & \valpa{142.74}{44.63} & \valpa{1.84}{0.12} & \valpa{0.608}{0.027}
& \valpa{0.911}{0.034} & \valpa{111.16}{60.12} & \valpa{1.61}{0.16} & \valpa{0.826}{0.061} \\
\midrule
\multirow{2}{*}{Asyrp}
& Mani
& \valpa{0.478}{0.018} & \valpa{179.94}{19.41} & \valpa{1.38}{0.13} & \valpa{0.554}{0.035}
& \valpa{0.496}{0.023} & \valpa{287.97}{91.81} & \valpa{1.57}{0.11} & \valpa{0.696}{0.085} \\
& SafeMark
& \valdag{1.000} & \valdag{124.77} & \valdag{2.25} & \valdag{0.592}
& \valdag{1.000} & \valdag{270.63} & \valdag{4.31} & \valdag{0.752} \\
\midrule
\multirow{2}{*}{Eff-Diff}
& Mani
& \valpa{0.460}{0.028} & \valpa{174.96}{30.74} & \valpa{1.36}{0.13} & \valpa{0.630}{0.035}
& \valpa{0.489}{0.031} & \valpa{96.42}{51.91} & \valpa{1.29}{0.22} & \valpa{0.858}{0.056} \\
& SafeMark
& \valpa{0.943}{0.016} & \valpa{188.60}{28.92} & \valpa{1.39}{0.13} & \valpa{0.604}{0.030}
& \valpa{0.942}{0.019} & \valpa{113.67}{49.79} & \valpa{1.54}{0.20} & \valpa{0.824}{0.052} \\
\bottomrule
\end{tabular}%
}
\end{table}

\section{Implications for Attacks, Defenses, and Editing Assumptions}\label{app:discussion}

\textit{SafeMark} is not only a watermark-preserving
text-guided image manipulation framework, but also offers a unified perspective for understanding both attacks and defenses in generative image editing systems.
From a security standpoint, text-guided image editing pipelines can be deemed as a class of watermark removal or degradation attacks, where the editing process, intentionally or not, alters semantic content while disrupting embedded ownership signals.
Different editing models therefore correspond to heterogeneous attack strategies, each inducing distinct transformation trajectories that may weaken or erase watermarks.
Under this view, \textit{SafeMark} functions as a defense mechanism by explicitly constraining the editing process to preserve watermark consistency.
Unlike prior robust watermarking approaches that focus on improving the intrinsic robustness of the watermark itself, \textit{SafeMark} enforces watermark preservation as an explicit objective during editing, thereby decoupling watermark survivability from the specific characteristics of the editor.
This interpretation highlights \textit{SafeMark} as a general defensive layer that can be integrated into generative editing pipelines to safeguard image provenance and ownership in the presence of powerful semantic manipulation tools.

Meanwhile, this defense-oriented interpretation clarifies the assumptions and limitations of \textit{SafeMark}.
The framework operates under a white-box setting, requiring access to both the editing model and the watermark encoder--decoder.
While this assumption is realistic for model developers or service providers who control the editing pipeline, it limits the direct applicability of \textit{SafeMark} in no-box or black-box settings where internal model parameters or decoding mechanisms are unavailable.
In addition, \textit{SafeMark} relies on watermarking schemes with an explicit decoder to evaluate and enforce watermark consistency during editing, and thus does not directly apply to approaches that embed information solely in the initial diffusion noise without a decoding process.

Despite these constraints, our findings provide broader insights into watermark preservation under text-guided editing.
The experimental results suggest that watermark degradation is not an inevitable consequence of semantic manipulation, but rather arises from unconstrained editing dynamics.
By enforcing watermark-aware constraints, \textit{SafeMark} demonstrates that high-fidelity semantic editing and watermark preservation are not fundamentally incompatible.
This observation indicates that even in no-box settings, it may be possible to design editing strategies or auxiliary mechanisms that implicitly regularize semantic transformations to avoid destroying watermark signals, without requiring direct access to a watermark decoder.
Therefore, designing decoder-free or black-box–compatible watermark-preserving editing mechanisms is a promising direction for future research.

\section{Proofs of Theoretical Results}\label{app:theory}

\subsection*{Proof of Theorem~\ref{thm:fano}}

Fano's inequality applied to $\hat W$ as an estimate of $W$ gives
$H(W\mid\hat W)\le H_2(P_e)+P_e\log(|\mathcal{W}|-1)\le 1+P_e\,H(W)$,
since $H_2(P_e)\le 1$ bit and (for uniform $P_W$) $\log|\mathcal{W}|=H(W)$.
By the data-processing inequality (DPI) on $W\to X_e\to\hat W$,
$H(W\mid\hat W)=H(W)-I(W;\hat W)\ge H(W)-I_\theta(W;X_e)$.
Combining and rearranging yields Eq.~\eqref{eq:fano}.\hfill$\square$

\paragraph{Remark.}
Theorem~\ref{thm:fano} is a \emph{necessary} condition only; it does not certify low $P_e$.
Average bit accuracy and block recovery are distinct quantities:
at $\bar a=0.9$ and $B=64$, $H_2(0.1)\approx 0.469$, so Lemma~\ref{lem:acc-mi} gives $I_\theta\ge 64(1-0.469)\approx 34$ bits and Theorem~\ref{thm:fano} gives $P_e\ge 1-35/64\approx 0.45$.
The Fano bound is loose (under independent bit errors the actual block error is $1-0.9^{64}\approx 0.999$), but it is consistent with the qualitative observation that long messages require very high per-bit accuracy.
We therefore use Eq.~\eqref{eq:capacity-from-tau} as a principled formal motivation for bit-level training, rather than as a sufficient condition for exact $W$-recovery.

\subsection*{Proof of Lemma~\ref{lem:acc-mi}}

For each bit $b$, applying Fano's inequality to the binary channel $W_b\to\hat W_b$
($|\mathcal{W}_b|=2$, so $\log(|\mathcal{W}_b|-1)=0$) gives $H(W_b\mid\hat W_b)\le H_2(1-a_b)$.
Expanding $H(W\mid\hat W)$ by the chain rule and using that conditioning on a richer $\sigma$-algebra reduces entropy,
\[
H(W\mid\hat W)
\;=\;\sum_{b=1}^B H(W_b\mid W_{<b},\hat W)
\;\le\;\sum_{b=1}^B H(W_b\mid\hat W_b)
\;\le\;\sum_{b=1}^B H_2(1-a_b),
\]
where the first inequality holds because $(W_{<b},\hat W)$ contains $\hat W_b$ as a sub-component.
By concavity of $H_2$ on $[0,1]$ (Jensen), $\sum_b H_2(1-a_b)\le B\,H_2(1-\bar a)$.
Finally, the DPI on $W\to X_e\to\hat W$ gives
$I_\theta(W;X_e)\ge I(W;\hat W)=H(W)-H(W\mid\hat W)\ge H(W)-B\,H_2(1-\bar a)$.\hfill$\square$

\begin{lemma}[Soft-to-hard calibration]\label{lem:calibration}
$\widetilde{\mathrm{Acc}}(\hat{\mathbf{w}},\mathbf{w})\ge\tau$ implies hard-bit accuracy $\bar a\ge 4\tau-3$.
\end{lemma}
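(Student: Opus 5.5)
The plan is a pointwise wrong-bit-count argument on a single realization $(\hat{\mathbf{w}},\mathbf{w})$, followed by averaging. The hard decision is $\hat W_b=\mathbb{1}[\hat w_b\ge 1/2]$, i.e.\ thresholding the sigmoid output at $1/2$. Bit $b$ is decoded wrongly exactly when $|\hat w_b-w_b|\ge 1/2$, up to the tie convention at $\hat w_b=1/2$, which I will assign to the error side so the bound is conservative. For such a bit the Brier term satisfies $1-(\hat w_b-w_b)^2\le 1-1/4=3/4$. For a correctly decoded bit, the term is at most $1$.

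\textbf{Step 1 (pointwise bound).} Let $k$ be the number of wrongly decoded bits in the realization. Splitting the sum in Eq.~\eqref{eq:soft-acc} over wrong and correct bits gives
\[
\widetilde{\mathrm{Acc}}\le \frac{1}{B}\Bigl(\tfrac34 k+(B-k)\Bigr)=1-\frac{k}{4B}.
\]
Hence $\widetilde{\mathrm{Acc}}\ge\tau$ forces $k/B\le 4(1-\tau)$. The realized hard-bit accuracy $1-k/B$ is therefore at least $1-4(1-\tau)=4\tau-3$.

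\textbf{Step 2 (passing to $\bar a$).} Here $\bar a$ is the average over the randomness of $(X_{\text{orig}},W,\mathbf{p},\xi)$ of the per-realization hard accuracy $1-k/B$. There are two readings of the hypothesis. If it holds almost surely, taking expectations of the pointwise bound gives $\bar a\ge 4\tau-3$ directly. If the hypothesis only holds in expectation, $\mathbb{E}[\widetilde{\mathrm{Acc}}]\ge\tau$, the argument still works because Step 1's bound $1-k/B\ge 1-4(1-\widetilde{\mathrm{Acc}})$ is affine in $\widetilde{\mathrm{Acc}}$. Linearity of expectation then yields $\bar a\ge 4\,\mathbb{E}[\widetilde{\mathrm{Acc}}]-3\ge 4\tau-3$. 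I will state this affine form explicitly so that both readings are covered.

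\textbf{Main obstacle.} There is no real analytic difficulty; the only subtle points are conventions. First, the rounding rule and the tie at $1/2$ must be fixed so that wrong bits really have squared error at least $1/4$. Second, it must be clear which expectation defines $\bar a$. I will also note the sanity checks that connect back to Eq.~\eqref{eq:soft-mi}. The bound is vacuous for $\tau\le 3/4$. It yields $\bar a\ge 1/2$ exactly at $\tau=7/8$, which is the regime where $H_2(1-\bar a)$ is monotone. At $\tau=1$ it forces $k=0$.
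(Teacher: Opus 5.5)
Your proposal is correct and follows the paper's proof essentially verbatim. The paper also bounds each hard-wrong bit's Brier term by $3/4$ and each correct bit's by $1$, obtains $\widetilde{\mathrm{Acc}}\le 1-(1-\bar a^{\mathrm{s}})/4$ per sample, and then takes expectations of this affine inequality. Your extra care about the tie at $\hat w_b=1/2$ and about the almost-sure versus in-expectation reading of the hypothesis is harmless; the implication ``wrong $\Rightarrow |\hat w_b-w_b|\ge 1/2$'' holds under any tie rule, and your discussion just makes explicit what the paper's final sentence leaves implicit.
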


\subsection*{Proof of Lemma~\ref{lem:calibration}}

Let $e:=B(1-\bar a^{\mathrm{s}})$ be the number of hard-wrong bits in the sample.
For any wrong bit, $|\hat w_b-w_b|\ge 1/2$, hence $1-(\hat w_b-w_b)^2\le 3/4$.
For any correct bit, $1-(\hat w_b-w_b)^2\le 1$. Therefore
\[
\widetilde{\mathrm{Acc}}^{\mathrm{s}}
\;\le\;
\frac{(B-e)\cdot 1+e\cdot(3/4)}{B}
\;=\;1-\frac{e}{4B}
\;=\;1-\frac{1-\bar a^{\mathrm{s}}}{4},
\]
which rearranges to $\bar a^{\mathrm{s}}\ge 4\widetilde{\mathrm{Acc}}^{\mathrm{s}}-3$.
Taking expectation preserves the linear inequality.\hfill$\square$

\paragraph{Remark.}
For $\tau\in[1/2,7/8)$ the calibration bound $\bar a\ge 4\tau-3$ becomes vacuous (it implies $\bar a<1/2$, where Lemma~\ref{lem:acc-mi} is trivial).
Tightening the calibration in this regime via cross-entropy surrogates, whose direct Barber--Agakov bound gives $I_\theta\ge H(W)-\mathbb{E}[\mathrm{BCE}]$, is straightforward but unused in our experiments.
SafeMark's reported configuration uses $\tau=1$ throughout, where zero hinge loss is equivalent to perfect hard-bit recovery.

\subsection*{Theorem~\ref{thm:recovery}: Finite-Step Entry into the Inactive Phase}

\begin{theorem}[Finite-step entry into the inactive phase]
\label{thm:recovery}
Assume:
\begin{itemize}
\setlength\itemsep{0pt}
  \item[\textnormal{(A1)}] $\widetilde{\mathrm{Acc}}(\cdot)$ is $L$-smooth in $\theta$.
  \item[\textnormal{(A2)}] In the active phase, $\|\nabla_\theta\widetilde{\mathrm{Acc}}(\theta)\|\ge\mu>0$.
  \item[\textnormal{(A3)}] In the active phase, $\langle\nabla\mathcal{L}_{\mathrm{wm}}(\theta),\nabla\mathcal{L}_{\mathrm{total}}(\theta)\rangle\ge c\|\nabla\mathcal{L}_{\mathrm{wm}}(\theta)\|^2$ for some $c>0$.
  \item[\textnormal{(A4)}] $\|\nabla_\theta\mathcal{L}_{\mathrm{total}}(\theta)\|\le M$ for all iterates $\theta$.
\end{itemize}
Then gradient descent $\theta_{t+1}=\theta_t-\eta\nabla\mathcal{L}_{\mathrm{total}}(\theta_t)$ with step size $\eta\le c\mu^2/(LM^2)$, starting from any $\theta_0$ in the active phase, enters the inactive phase $\{\widetilde{\mathrm{Acc}}\ge\tau\}$ within
\begin{equation}
\label{eq:recovery-bound}
T \;\le\; \frac{2\,\mathcal{L}_{\mathrm{wm}}(\theta_0)}{\eta c\mu^{2}}
\end{equation}
iterations.
\end{theorem}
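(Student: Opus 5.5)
The plan is to run a standard descent-lemma argument, not on the hinge $\mathcal{L}_{\mathrm{wm}}$ itself but on the smooth function $g(\theta):=\tau-\widetilde{\mathrm{Acc}}(\theta)$.

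Setting up $g$ avoids the kink of the hinge at the phase boundary. By (A1), $g$ is $L$-smooth on all of parameter space. In the active phase $\{\widetilde{\mathrm{Acc}}<\tau\}$ we have $\mathcal{L}_{\mathrm{wm}}=g>0$ and $\nabla\mathcal{L}_{\mathrm{wm}}=\nabla g=-\nabla\widetilde{\mathrm{Acc}}$. Consequently (A2) gives $\|\nabla g\|\ge\mu$ there, and (A3) becomes $\langle\nabla g,\nabla\mathcal{L}_{\mathrm{total}}\rangle\ge c\|\nabla g\|^2$. The point of switching to $g$ is that the descent lemma can then be applied to the whole segment $[\theta_t,\theta_{t+1}]$, even if that segment crosses into the inactive region. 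This crossing is the only subtle step, and using $g$ removes it.

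Next, the one-step decrease. Suppose $\theta_t$ is active. Then
\[
g(\theta_{t+1})\le g(\theta_t)-\eta\langle\nabla g(\theta_t),\nabla\mathcal{L}_{\mathrm{total}}(\theta_t)\rangle+\tfrac{L}{2}\eta^2\|\nabla\mathcal{L}_{\mathrm{total}}(\theta_t)\|^2 .
\]
Apply (A3) and then (A2) to the inner product, and (A4) to the last term. This gives
\[
g(\theta_{t+1})\le g(\theta_t)-\eta c\mu^2+\tfrac{L}{2}\eta^2M^2 .
\]
The step-size condition $\eta\le c\mu^2/(LM^2)$ yields $\tfrac{L}{2}\eta^2M^2\le\tfrac12\eta c\mu^2$. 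Hence every active step decreases $g$ by at least $\eta c\mu^2/2$.

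To conclude, suppose $\theta_0,\dots,\theta_{T-1}$ are all active. Telescoping the one-step bound gives
\[
g(\theta_T)\le \mathcal{L}_{\mathrm{wm}}(\theta_0)-T\eta c\mu^2/2 .
\]
If $T>2\mathcal{L}_{\mathrm{wm}}(\theta_0)/(\eta c\mu^2)$, then $g(\theta_T)<0$, i.e.\ $\widetilde{\mathrm{Acc}}(\theta_T)>\tau$. So the first index at which an iterate is inactive is at most $\lceil 2\mathcal{L}_{\mathrm{wm}}(\theta_0)/(\eta c\mu^2)\rceil$, which is the bound in Eq.~\eqref{eq:recovery-bound} up to the integer rounding.

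Two bookkeeping points remain. First, the boundary case $\widetilde{\mathrm{Acc}}=\tau$ is counted as inactive, so (A2) and (A3) are only ever invoked at strictly active iterates, where the hinge is differentiable. Second, the weight $\lambda_{\mathrm{wm}}$ and the semantic gradient enter only through $\nabla\mathcal{L}_{\mathrm{total}}$, and these are fully absorbed by (A3) and (A4).
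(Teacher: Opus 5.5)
Your proposal is correct and is essentially the paper's proof. The paper likewise applies the descent lemma to the globally $L$-smooth $f=\tau-\widetilde{\mathrm{Acc}}$, lower-bounds the inner product by $c\mu^2$ via (A3) and (A2), bounds the quadratic term via (A4), and uses $\eta\le c\mu^2/(LM^2)$ to obtain a per-step decrease of at least $\eta c\mu^2/2$. Your ceiling $\lceil 2\mathcal{L}_{\mathrm{wm}}(\theta_0)/(\eta c\mu^2)\rceil$ is a correct sharpening of the bookkeeping: without it, the stated bound can be off by one step when that ratio is not an integer.
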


\begin{proof}
Let $f(\theta):=\tau-\widetilde{\mathrm{Acc}}(\theta)$, so $\mathcal{L}_{\mathrm{wm}}(\theta)=\max(0,f(\theta))$.
By (A1), $f$ is $L$-smooth.
For any $\theta_t$ in the active phase ($f(\theta_t)=\mathcal{L}_{\mathrm{wm}}(\theta_t)>0$), the smoothness inequality gives
\[
f(\theta_{t+1})
\;\le\;
f(\theta_t)
-\eta\langle\nabla\mathcal{L}_{\mathrm{wm}}(\theta_t),\,\nabla\mathcal{L}_{\mathrm{total}}(\theta_t)\rangle
+\frac{L\eta^{2}}{2}\|\nabla\mathcal{L}_{\mathrm{total}}(\theta_t)\|^{2}.
\]
Applying (A3) and $\|\nabla\mathcal{L}_{\mathrm{wm}}\|\ge\mu$ from (A2), the inner-product term is at least $c\mu^2$;
applying (A4), the second-order term is at most $L\eta^2 M^2/2$. Hence
\[
f(\theta_{t+1})\;\le\;\mathcal{L}_{\mathrm{wm}}(\theta_t)-\eta c\mu^{2}+\frac{L\eta^{2}M^{2}}{2}.
\]
The choice $\eta\le c\mu^{2}/(LM^{2})$ gives $L\eta^2M^2/2\le \eta c\mu^2/2$, so
\[
f(\theta_{t+1})\;\le\;\mathcal{L}_{\mathrm{wm}}(\theta_t)-\frac{\eta c\mu^{2}}{2}.
\]
Since $\mathcal{L}_{\mathrm{wm}}\ge 0$, this descent of at least $\eta c\mu^2/2$ per active-phase step cannot continue beyond $T=2\mathcal{L}_{\mathrm{wm}}(\theta_0)/(\eta c\mu^2)$ steps without entering the inactive phase.
\end{proof}

\paragraph{Discussion of assumptions.}
(A1) follows from the Brier surrogate definition composed with the editor and decoder; it is the mildest of the four.
(A4) can be enforced by gradient clipping or by working on a bounded parameter ball.
(A3) holds whenever $\lambda_{\mathrm{wm}}$ is chosen sufficiently large relative to $\lambda_{\mathrm{sem}}G_{\mathrm{sem}}/\mu$ (under a uniform bound $\|\nabla\mathcal{L}_{\mathrm{sem}}\|\le G_{\mathrm{sem}}$).
(A2) is the strongest assumption: a P\L-type lower bound on $\|\nabla\widetilde{\mathrm{Acc}}\|$ uniformly on the active phase is not provable for general non-convex deep networks.
We treat it as a working assumption consistent with the rapid empirical convergence observed in Section~\ref{sec:eva}, and view Theorem~\ref{thm:recovery} as a \emph{conditional} result rather than an unconditional convergence guarantee.
Theorem~\ref{thm:recovery} extends to mini-batch SGD by replacing the deterministic budget with an expected-hitting-time analogue under a variance-bound assumption on the stochastic gradient.

\section{Additional Qualitative Examples of Watermark Preservation}
\label{app:examples}

\begin{figure*}[t]
    \centering
    \includegraphics[width=0.9\textwidth]{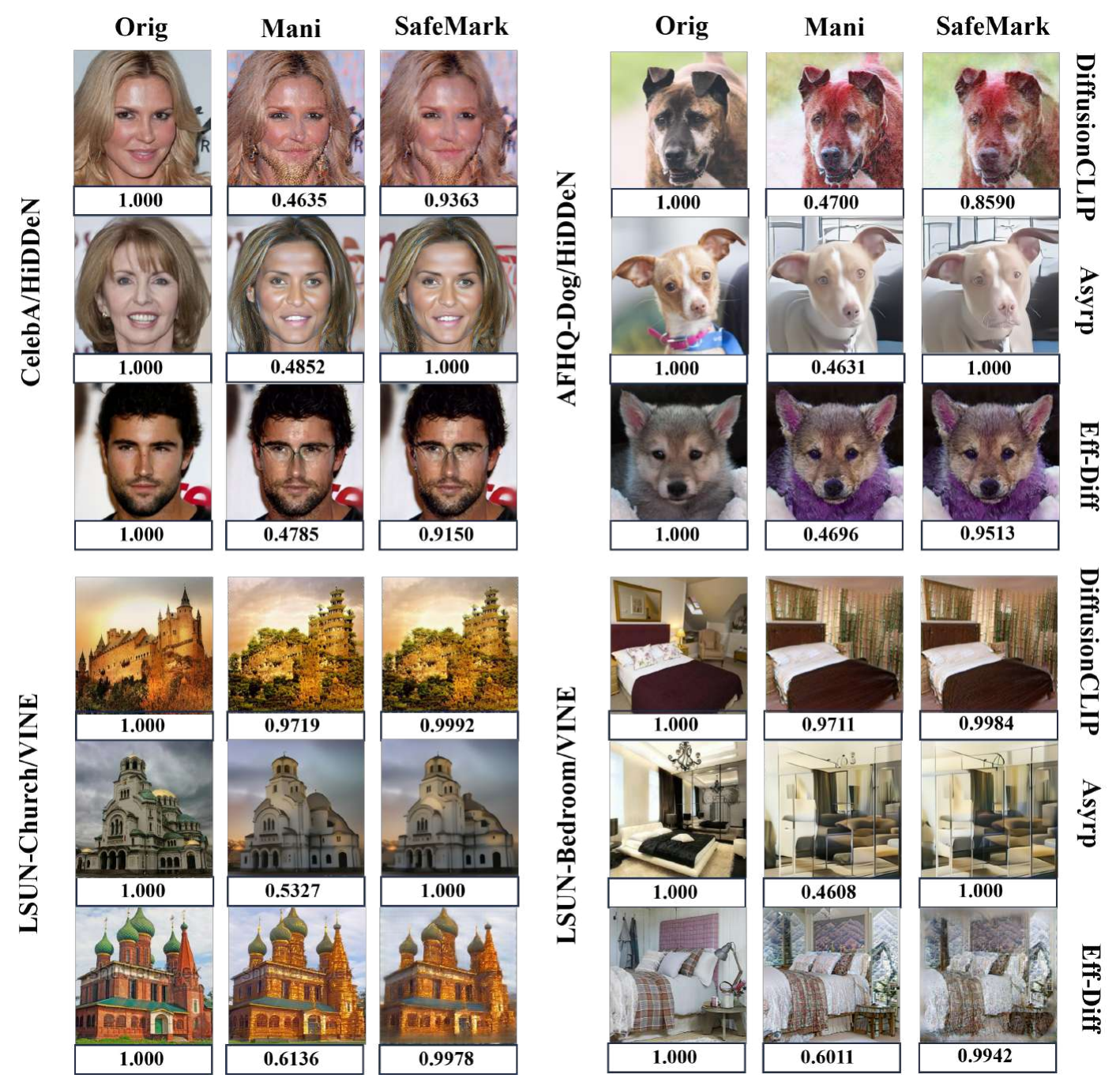}
    \caption{\textbf{Additional qualitative examples of \textit{SafeMark}.}
    We report representative generated/editing results across CelebA-HiDDeN, AFHQ-Dog-HiDDeN, LSUN-Church-VINE, and LSUN-Bedroom-VINE using DiffusionCLIP, Asyrp, and EffDiff.
    Columns correspond to \textbf{Orig} (original watermarked image), \textbf{Mani} (direct editing without protection), and \textbf{SafeMark} (editing with our protection).
    The value below each image is the decoded watermark \textit{bit accuracy} (bit Acc), where higher is better.}
    \label{fig:real}
\end{figure*}

\begin{figure*}[t]
    \centering
    \includegraphics[width=0.9\textwidth]{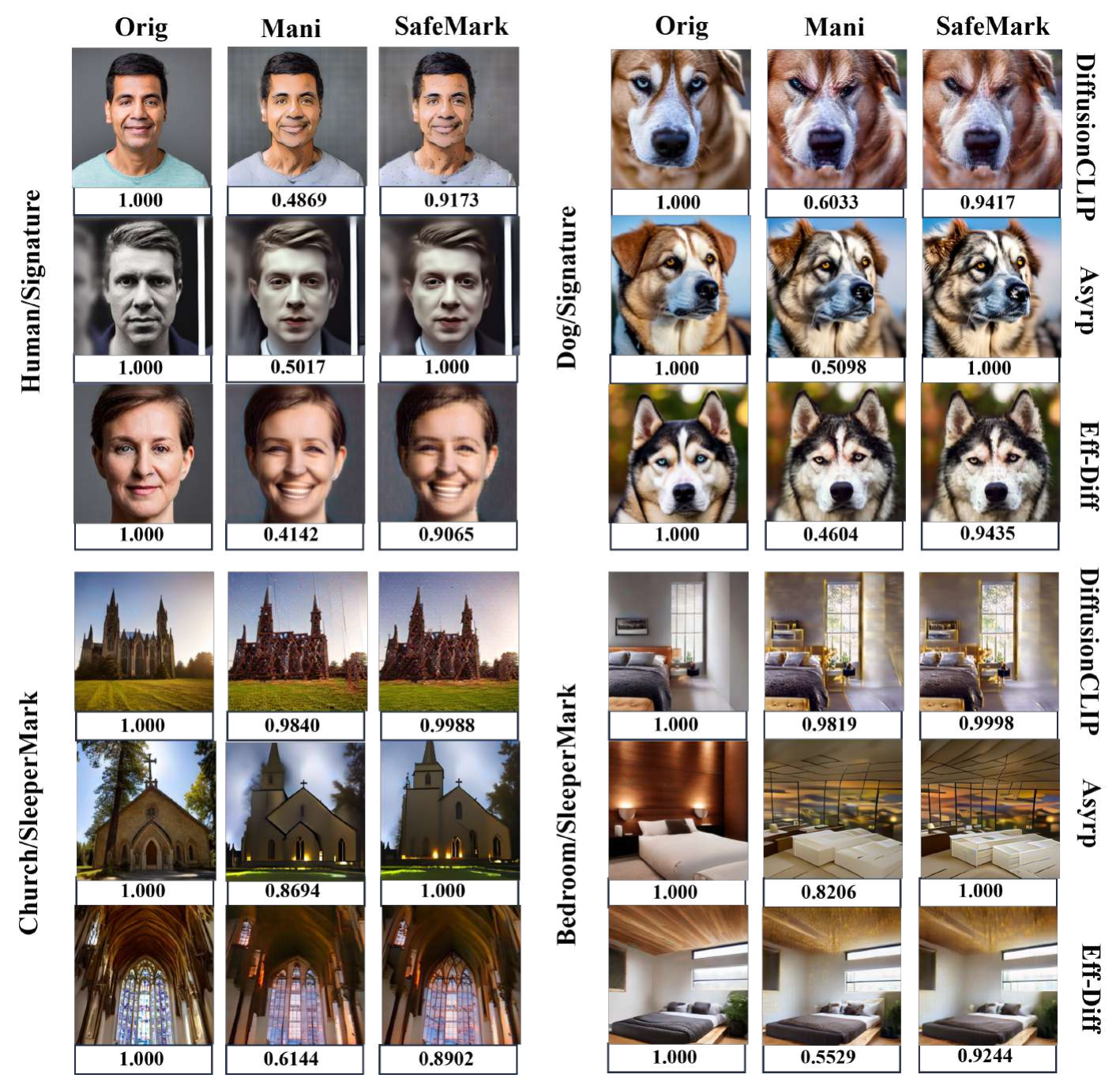}
    \caption{\textbf{Additional qualitative examples of \textit{SafeMark}.}
    We report representative generated/editing results across Human-Signature, Dog-Signature, Church-SleeperMark, and Bedroom-SleeperMark using DiffusionCLIP, Asyrp, and EffDiff.
    Columns correspond to \textbf{Orig} (original watermarked image), \textbf{Mani} (direct editing without protection), and \textbf{SafeMark} (editing with our protection).
    The value below each image is the decoded watermark \textit{bit accuracy} (bit Acc), where higher is better.}
    \label{fig:fake}
\end{figure*}

Figure~\ref{fig:real} and Figure~\ref{fig:fake} present additional qualitative results of \textit{SafeMark} under diverse combinations of datasets, watermarking schemes, and diffusion-based editing methods.
For each example, we show the \textit{Original} watermarked image (\textbf{Orig}), the directly edited output without protection (\textbf{Mani}), and the protected editing result produced by \textit{SafeMark} (\textbf{SafeMark}).
The number below each image reports the decoded watermark \textit{bit accuracy} (Acc).

Figure~\ref{fig:real} focuses on widely used benchmark datasets and watermarking methods, including CelebA--HiDDeN, AFHQ-Dog--HiDDeN, LSUN-Church--VINE, and LSUN-Bedroom--VINE.
Across different diffusion editors (DiffusionCLIP, Asyrp, and EffDiff), direct text-guided editing often causes severe degradation of watermark recoverability, as evidenced by the noticeably reduced Acc in \textbf{Mani}.
In contrast, \textit{SafeMark} consistently restores high decoding accuracy while preserving the intended editing effects, demonstrating robust watermark protection under realistic editing scenarios.

Figure~\ref{fig:fake} further extends this analysis to a different set of datasets
and watermarking schemes, including the Human--Signature, Dog--Signature,
Church--SleeperMark, and Bedroom--SleeperMark settings.
Despite the use of alternative watermark designs and visual domains, a similar
trend is observed: direct editing significantly disrupts watermark decoding,
whereas \textit{SafeMark} reliably maintains high bit accuracy after editing.
Together, these two groups of examples highlight that \textit{SafeMark} provides consistent watermark preservation across heterogeneous datasets, watermarking methods, and diffusion-based editors.



\end{document}